\newcommand{\e}{\varepsilon}
\begin{document}

\title{A Label Semantics Approach to Linguistic Hedges}
\date{28th January 2014}
\author{Martha Lewis and Jonathan Lawry \\
Department of Engineering Mathematics, \\University of Bristol, \\BS8 1TR, United Kingdom\\
\texttt{martha.lewis@bristol.ac.uk, j.lawry@bristol.ac.uk}}

\newtheorem{dfn}{Definition}
\newtheorem{thm}[dfn]{Theorem}
\newtheorem{cor}[dfn]{Corollary}
\newtheorem {exa}[dfn]{Example}

\maketitle

\begin{abstract}
We introduce a model for the linguistic hedges `very' and `quite' within the label semantics framework, and combined with the prototype and conceptual spaces theories of concepts. The proposed model emerges naturally from the representational framework we use and as such, has a clear semantic grounding. We give generalisations of these hedge models and show that they can be composed with themselves and with other functions, going on to examine their behaviour in the limit of composition.
\end{abstract}

\section{Introduction}
\label{sect:intro}

The modelling of natural language relies on the idea that languages are compositional, i.e. that the meaning of a sentence is a function of the meanings of the words in the sentence, as proposed by \cite{frege1948}. Whether or not this principle tells the whole story, it is certainly important as we undoubtedly manage to create and understand novel combinations of words. Fuzzy set theory has long been considered a useful framework for the modelling of natural language expressions, as it provides a functional calculus for concept combination \cite{zadeh1965, zadeh}. 

A simple example of compositionality is hedged concepts. Hedges are words such as `very', `quite', `more or less', `extremely'. They are usually modelled as transforming the membership function of a base concept to either narrow or broaden the extent of application of that concept. So, given a concept `short', the term `very short' applies to fewer objects than `short', and  `quite short' to more. Modelling a hedge as a transformation of a concept allows us to determine membership of an object in the hedged concept as a function of its membership in the base concept, rather than building the hedged concept from scratch \cite{zadehhedges}.

Linguistic hedges have been widely applied, including in fuzzy classifiers \cite{cetisli, chat, liu, marin2003} and database queries \cite{bordogna, boscempty}. Using linguistic hedges in these applications allows increased accuracy in rules or queries whilst maintaining human interpretability of results \cite{bouchon2009, marin2000}. This motivates the need for a semantically grounded account of linguistic hedges: if hedged results are more interpretable then the hedges used must themselves be meaningful.

In the following we provide an account of linguistic hedges that is both functional, and semantically grounded. In its most basic formulation, the operation requires no additional parameters, although we also show that the formulae can be generalised if  necessary. Our account of linguistic hedges uses the label semantics framework to model concepts \cite{lawry2004}. This is a random set approach which quantifies an agent's subjective uncertainty about the extent of application of a concept. We refer to this uncertainty as \emph{semantic uncertainty} \cite{lawry2009} to emphasise that it concerns the definition of concepts and categories, in contrast to stochastic uncertainty which concerns the state of the world. In \cite{lawry2009} the label semantics approach is combined with conceptual spaces \cite{gard2004} and prototype theory \cite{rosch}, to give a formalisation of concepts as based on a prototype and a threshold, located in a conceptual space. This approach is discussed in detail in section \ref{sec:bkg}. An outline of the paper is then as follows: section \ref{sec:lh} discusses different approaches to linguistic hedges from the literature, and compares these with our model. Subsequently, in section \ref{sec:results}, we give formulations of the hedges `very' and `quite'. These are formed by considering the dependence of the threshold of a hedged concept on the threshold of the original concept. We give a basic model and two generalisations, show that the models can be composed and investigate the behaviour in the limit of composition. Section \ref{sec:disc} compares  our results to those in the literature and proposes further lines of research.

\section{Theoretical approach to concepts}
\label{sec:bkg}
\subsection{Prototype theory and fuzzy set theory}
Prototype theory views concepts as being defined in terms of prototypes, rather than by a set of necessary and sufficient conditions. Elements from an underlying metric space then have graded membership in a concept depending on their similarity to a prototype for the concept. There is some evidence that humans use natural categories in this way, as shown in experiments  reported in \cite{rosch}. Fuzzy set theory \cite{zadeh1965} was proposed as a calculus for combining and modifying concepts with graded membership, and extended these ideas in \cite{zadeh} to linguistic variables as variables taking words as values, rather than numbers. For example, `height' can be viewed as a linguistic variable taking values `short,' `tall', `very tall', etc. The variable relates to an underlying universe of discourse $\Omega$, which for the concept `tall' could be $\mathbb{R}^+$. Then each value $L$ of the variable is associated with a fuzzy subset of $\Omega$, and a function $\mu_L:\Omega \rightarrow [0,1]$ associates  with each $x \in \Omega$ the value of its membership in $L$. Prototype theory gives a semantic basis to fuzzy sets through the notion of similarity to a prototype, as described in \cite{dpsemfuz}. In this context, concepts are represented by fuzzy sets and membership of an element in a concept is quantified by its similarity to the prototype. In this situation the fuzziness of the concept is seen as inherent to the concept. An alternative interpretation for fuzzy sets is random set theory, see \cite{dpsemfuz} for an exposition. Here, the fuzziness of a set comes from uncertainty about a crisp set, i.e. semantic uncertainty, rather than fuzziness inherent in the world. This second approach is the stance taken by \cite{lawry2009}, and which we now adopt in this paper.

\subsection{Conceptual Spaces}
\emph{Conceptual spaces} are proposed by G\"{a}rdenfors in \cite{gard2004} as a framework for representing information at the conceptual level. G\"{a}rdenfors contrasts his theory with both a symbolic, logical approach to concepts, and an associationist approach where concepts are represented as associations between different kinds of basic  information elements. Rather, conceptual spaces are geometrical structures based on quality dimensions such as weight, height, hue, brightness, etc. It is assumed that conceptual spaces are metric spaces, with an associated distance measure. This might be Euclidean distance, or any other appropriate metric. The distance measure can be used to formulate a measure of similarity, as needed for prototype theory - similar objects are close together in the conceptual space, very different objects are far apart.

To develop the conceptual space framework, G\"{a}rdenfors also introduces the notion of integral and separable dimensions. Dimensions are integral if assignment of a value in one dimension implies assignment of a value in another, such as depth and breadth. Conversely, separable dimensions are those where there is no such implication, such as height and sweetness. A \emph{domain} is then defined as a set of quality dimensions that are separable from all other dimensions, and a \emph{conceptual space} is defined as a collection of one or more domains.

G\"{a}rdenfors goes on to define a \emph{property} as a convex region of a domain in a conceptual space. A \emph{concept} is defined as a set of such regions that are related via a set of salience weights. This casting of (at least) properties as convex regions of a domain sits very well with prototype theory, as G\"{a}rdenfors points out. If properties are convex regions of a space, then it is possible to say that an object is more or less central to that region. Because the region is convex, its centroid will lie within the region, and this centroid can be seen as the prototype of the property. 

\subsection{Label Semantics}
\label{sub:lsem}
The label semantics framework was proposed by \cite{lawry2004} and related to prototype theory and conceptual spaces in \cite{lawry2009}. In this framework, agents use a set of labels $LA = \{L_1, L_2, ..., L_n\}$ to describe an underlying conceptual space $\Omega$ which has a distance metric $d(x,y)$ between points. In fact, it is sufficient that $d(x,y)$ be a pseudo-distance. When $x$ or $y$ is a set, say $Y$, we take $d(x,Y) = \text{min}\{d(x,y): y \in Y\}$. In this case, the set $Y$ is seen as an ontic set, i.e., a set where all elements are jointly prototypes, as opposed to an epistemic set describing a precise but unknown prototype, as described in \cite{dubois2012}. Each label $L_i$ is associated with firstly a set of prototype values $P_i \subseteq \Omega$, and secondly a threshold $\e_i$, about which the agents are uncertain. The thresholds $\e_i$ are drawn from probability distributions $\delta_{\e_i}$. Labels $L_i$ are associated with neighbourhoods $\mathcal{N}^{\e_i}_{L_i} = \{x \in \Omega : d(x, P_i) \leq \e_i\}$. The neighbourhood can be seen as the extension of the concept $L_i$. The intuition here is that $\e_i$ captures the idea of being sufficiently close to prototypes $P_i$. In other words, $x \in \Omega $ is sufficiently close to $P_i$ to be appropriately labelled as $L_i$ providing that $d(x,P_i) \leq \e_i$. 

Given an element $x \in \Omega$, we can ask how appropriate a given label is to describe it. This is quantified by an appropriateness measure, denoted $\mu_{L_i}(x)$. We are intentionally using the same notation as for the membership function of a fuzzy set. This quantity is the probability that the distance from $x$ to $P_i$, the prototype of $L_i$, is less than the threshold $\e_i$, as given by:

\[
\mu_{L_i}(x) = P(\e_i : x \in \mathcal{N}^{\e_i}_{L_i}) = P(\e_i : d(x, P_i) \leq \e_i) = \int_{d(x, P_i)}^\infty \delta_{\e_i}(\e_i) \mathrm{d}\e_i
\]

We also use the notation $\int_{d}^\infty \delta_{\e_i} (\e_i)\mathrm{d}\e_i = \Delta_i(d)$, according to which $\mu_{L_i}(x) = \Delta_i(d(x, P_i))$. The above formulation provides a link to the random set interpretation of fuzzy sets. Random sets are random variables taking sets as values. If we view $\mathcal{N}^{\e_i}_{L_i}$ as a random set from $\mathbb{R}^+$ into $2^\Omega$, then $\mu_{L_i}(x)$  is the single point coverage function of $\mathcal{N}^{\e_i}_{L_i}$, as defined in \cite{lawrybook}, and also commonly called a contour function \cite{shafer1976}.

Labels can often be semantically related to each other. For example, the label `pet fish' is semantically related to the labels `pet' and `fish', and the label `very tall' related to the label `tall'. This prompts two questions: firstly, how the prototypes of each concept are related to each other, and secondly, how the thresholds of each concept are related. Two simple models for the relationships between the thresholds are given in \cite{lawry2009}. The \emph{consonant model} takes all thresholds as being dependent on one common underlying threshold. So, all thresholds have the same distance metric $d$ and are related to a base threshold $\e$ by the dependency that $\e_i = f_i(\e)$ for increasing functions $f_i$. In contrast, the \emph{independence model} takes all thresholds as being independent of each other. This might hold when labels are taken from different conceptual spaces.

Between these two extremes, we model dependencies between thresholds as a Bayesian network - i.e., a directed acyclic graph whose edges encode conditional dependence between variables. The key property of this type of network is that the joint distribution of all variables can be broken into factors that depend only on each individual variable and its parents. So, for example, the network in figure \ref{fig:BN} can be factorised as $\delta(\e_1,\e_2, \e_3, \e_4, \e_5) = \delta_{\e_1}(\e_1)\delta_{\e_2}(\e_2)\delta_{\e_3|\e_1, \e_2}(\e_3|\e_1,\e_2)\delta_{\e_4|\e_2}(\e_4|\e_2)\delta_{\e_5|\e_3}(\e_5|\e_3)$.

\begin{figure}
  \centering
  \includegraphics[scale = 1]{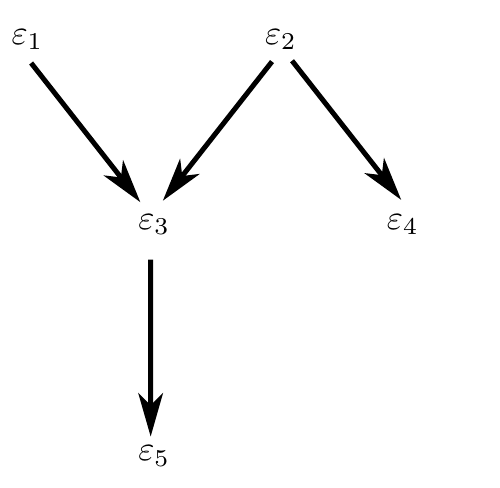}
  \caption{Example of a Bayesian network of thresholds. Dependencies between thresholds $\e_i$ are represented by arrows.}
  \label{fig:BN}
\end{figure}

This enables calculation of the joint distribution and therefore marginal distributions in an efficient manner.

One intuitively easy example is where the dependency of one threshold $\e_2$ on another $\e_1$ is that $\e_2 \leq \e_1$. This could be taken to model the dependency of the threshold of the concept `very tall' on the threshold of `tall'. The label `very tall' should be appropriate to describe fewer people than the label `tall'. Therefore, the  threshold for describing someone as `very tall' will be narrower than the threshold for describing someone as `tall', i.e. $\e_{\text{very tall}} \leq \e_{\text{tall}}$. This simple model will form part of the approach to modelling linguistic hedges, as outlined in the sequel.

\section{Approaches to linguistic hedges}
\label{sec:lh}
Linguistic hedges have been given varying treatments in the literature. In this section we summarise these different approaches and state the approach that we wish to take, discussing properties that hedge modifiers may need. We give two specific approaches from the literature with which we will compare our results.

In \cite{zadehhedges} the idea of linguistic hedges as operators modifying fuzzy sets was introduced, so that the membership function $\mu_{hL}(x)$ of a hedged concept, $hL$, is a function of the membership of the base concept $L$, i.e. $\mu_{hL}(x) = f(\mu_L(x))$. Furthermore, truth can be considered as a linguistic variable and hence a fuzzy set \cite{zadeh}, so that the application of a hedge can be seen as modifying the truth value of a sentence using that concept \cite{elsayed, honam, zadeh}. This second view is useful in approximate reasoning,  and allows for an algebraic approach to investigating the properties of linguistic hedges, as introduced in \cite{zadeh}, and expanded upon in \cite{howech, elsayed, honam}. The approach we take, however, is to view a hedge as modifying the fuzzy set associated with a concept directly, as taken by \cite{bosc, bouchon2009, dilascio1999, novak}. Rather than examining the algebraic properties of hedges or their role in reasoning, we look at how hedges are semantically grounded and argue that our approach provides a particularly clear semantics.

We will propose a set of operations that may be used for both expansion and refinement of single concepts. This is in contrast to the work presented in \cite{tang2009} in which information coarsening is effected by taking disjunctions of labels. The idea of a hedged concept has some similarities to that of the bipolar model of concepts described in \cite{tang2012}, since if it is appropriate to describe someone as `very tall', it must be appropriate to describe them as `tall', and similarly describing someone as `quite tall' implies that it is not entirely inappropriate to describe them as `tall'. However, we see the concepts derived by application of hedges as labels in their own right which can be used to describe data or objects. 

Zadeh divides hedges into two types. A type 1 hedge can be seen as an operator acting on a single fuzzy set. Examples are `very', `more or less', `quite', or `extremely' \cite{zadehhedges}. Type 2 hedges are more complicated and include modifiers such as `technically' or `practically'. In \cite{zadehhedges} concepts are considered as made up of various different components, with the membership function a weighted sum of the memberships of the individual components. Type 1 hedges operate on all components equally, whereas type 2 hedges differentiate between components. For example, the hedge `essentially' might give more weight to the most important components in a concept. Type 2 hedges are further explored in \cite{lakoffhedges, waart}, where components of a concept are categorised as \emph{definitional}, \emph{primary} or \emph{secondary}, and the hedges `technically', `strictly speaking' and `loosely speaking' are analysed in terms of these categories. Although in the following we restrict ourselves to consideration of type 1 hedges only, the treatment of concepts as having different components is mirrored by the conceptual spaces view, where each component might be seen as a dimension in the conceptual space. Further development of the framework may therefore allow a treatment of type 2 hedges.

A further distinction between types of hedge lies in the difference between powering or shifting modifiers. Powering modifiers are of the form $\mu_{hL}(x) = (\mu_L(x))^k$, where $hL$ refers to the hedged concept and $k$ is some real value, and shifting modifiers are of the form $\mu_{hL}(x) = (\mu_L(x-a))$. Zadeh introduces both types of modifier in his discussion of type 1 hedges \cite{zadehhedges}, however his powering modifiers are most frequently cited. These are the \emph{concentration} operator $CON(\mu_{\text{tall}}(x)) = (\mu_{\text{tall}}(x))^2$, and the \emph{dilation} operator $DIL(\mu_{\text{tall}}(x)) = (\mu_{\text{tall}}(x))^\frac{1}{2}$, which are often taken to implement the hedges `very' and `quite', (alternatively `more or less'), respectively. \footnote{Zadeh in fact proposes a rather more complicated hedge for `more or less', which involves a combination of powering and shifting, however, the dilation operator is more frequently quoted in the literature.}

The operators $CON$ and $DIL$ leave the core, $\{x \in \Omega : \mu_L(x) = 1\}$, and support  $\{x \in \Omega : \mu_L(x) \neq 0\}$, of the fuzzy sets unchanged, which is often argued to be undesirable \cite{boscempty, bosc, novak, marin}. In particular, \cite{boscempty} argue that in a fuzzy database, if a concentrating hedge is being used to refine a query that is returning too many objects, the hedge needs to reduce the number of objects returned, and hence narrow down the core. Furthermore, \cite{marin2003} find that classifiers using the $CON$ and $DIL$ operators (classical hedges) do not perform as well as those with hedges that modify the core and support of the fuzzy sets. In contrast, Zadeh himself argues that the core should not be altered. The application of a modifier `very' to a property given by a crisp set should leave that property unchanged: `very square' is the same as `square'. A fuzzy set is made up of a non-fuzzy part, the core, and a fuzzy part, $\{x \in \Omega : 0 < \mu_L(x) <1\}$. Since the core of a fuzzy set is a crisp set, it should be left unchanged. The use of classical hedges does improve performance over non-hedged fuzzy rules in expert systems \cite{cetisli, chat, liu}, so the argument against classical hedges is a matter of degree.

The use of the $CON$ and $DIL$ operators to model the hedges `very' and `quite' is further criticised on the basis that the modifiers are arbitrary and semantically ungrounded. No justification is given for these modifiers other than that they have what seem to be intuitively the right properties \cite{bosc, decock2002, novak}. Grounding hedges semantically is important for a theoretical account of what happens when we use terms like `very' and also for retaining interpretability in fuzzy systems. \cite{bosc, decock2002} both ground modifiers using a resemblance relation which takes into account how objects in the universe are similar to each other. \cite{novak} takes a horizon shifting approach.

In \cite{novak} the class of finite numbers is used as an example of the horizon shifting approach. Some numbers are certainly finite, however as numbers get larger, finiteness becomes impossible to verify. Mapping this idea onto the concept `small', we can say that there is a class of numbers that are definitely small, say $[0, c]$. As numbers get larger than $c$ we approach the horizon past which the concept `small' no longer applies, expressed as $1 - \epsilon(x)(x-c)$. So:

\[
\mu_{\text{small}}(x) = 
\begin{cases}
1 & \text{if } x \in [0, c] \\
1 - \epsilon(x)(x-c) & \text{if } x \geq c
\end{cases}
\]

Now, to implement the hedge `very', the horizon $c$ is shifted by a factor $\sigma$ and the membership function altered thus:

\[
\mu_{\text{very small}}(x) = 
\begin{cases}
1 & \text{if } x \in [0, \sigma c] \\
1 - \epsilon(x, \sigma)(x- \sigma c) & \text{if } x \geq  \sigma c
\end{cases}
\]

In \cite{novak}, examples of different kinds of membership functions that might be used to implement this idea are given. A linear membership function gives $\epsilon(x) = \frac{1}{a-c}$ where $a$ is the upper limit of the membership function. To implement the hedge, the function $\epsilon(x, \sigma)  = \frac{1}{\sigma (a-c)}$ is introduced, giving
\label{eqn:novak}
\[
\mu_{\text{small}}(x) = 
\begin{cases}
1 & \text{if } x \in [0, c] \\
1 - \frac{x-c}{a-c} & \text{if } x \in [c, a]\\
0 & \text{otherwise}
\end{cases}
\]
and 
\[
\mu_{\text{very small}}(x) = 
\begin{cases}
1 & \text{if } x \in [0, \sigma c] \\
1 - \frac{x- \sigma c}{\sigma(a-c)} & \text{if } x  \in [\sigma c, \sigma a]\\
0 & \text{otherwise}
\end{cases}
\]

\cite{bosc, decock2002} both ground their approaches in the idea of looking at the elements near a fuzzy set in order to contract or dilate the set. The two approaches are similar, so we restrict ourselves to that of \cite{bosc}. This approach introduces a fuzzy resemblance relation on the universe of discourse, and either a $T$-norm in the case of dilation, or a fuzzy implicator for concentration. The modifier is then implemented as follows. Consider a fuzzy set $F$ and a proximity relation $E^Z$ which is approximate equality, parametrised by a fuzzy set $Z$. As described in \cite{bosc}, $E$ is modelled by $(u,v) \rightarrow E(u,v) = Z(u - v)$, where $Z$ is a fuzzy interval centred on $0$ with finite support. In terms of a trapezoidal membership function, $Z$ can be expressed as $(-z - a, -z, z, z+a)$. Therefore, if $|u -v| \leq z$, $u$ and $v$ are judged to be approximately equal, i.e. $E^Z(u, v) = 1$. The set $F$ is dilated by $E^Z(F)(s) = \text{sup}_{r \in \Omega} T(F(r), E^Z(s,r))$, where $T$ is any $T$-norm, $min$ being the standard. 

To understand the effect that this has on a fuzzy set $F$, suppose that $F$ has a trapezoidal membership function $(A, B, C, D)$ where $[B, C]$ is the core of $F$ and $[A,B]$, $[C,D]$ the support, and that $Z$ similarly is $(-z - a, -z, z, z+a)$, with the T-norm $min$ used. Then $E^Z(F) = (A - z - a, B - z, C +z, D +z +a)$.

 Concentration is effected in a similar way:  $E_Z(F)(s) = \text{inf}_{r \in \Omega} I(F(r), E^Z(s,r))$, where I is a fuzzy implication.  If $F$ and $Z$ are as above with the condition that $C - B \geq 2z$, and $I$ is the G\"{o}del implication, then $E_Z(F)$ = $(A + z + a, B + z, C - z, D - z - a)$. 

For example, suppose we start with a set $F$ described in trapezoidal notation as $F = (A, B, C, D) = (2,4,6,8)$, and an approximate equality function parametrised by $Z =(-z-a, -z, z, z + a) = (-1, -0.5, 0.5, 1)$. The dilation of the set $F$ using T-norm $min$ is then:

\[
E^Z(F) = (A - z - a, B - z, C +z, D +z +a) = (1, 3.5, 6.5, 9)
\]

The concentration of the set $F$ using the G\"{o}del implication is:

\[
E_Z(F) = (A + z + a, B + z, C - z, D - z - a) = (3, 4.5, 5.5, 7)
\]

These effects are illustrated in figure \ref{fig:bosc_plot}.

\begin{figure}[htbp]
\centering
\includegraphics[scale = 0.5]{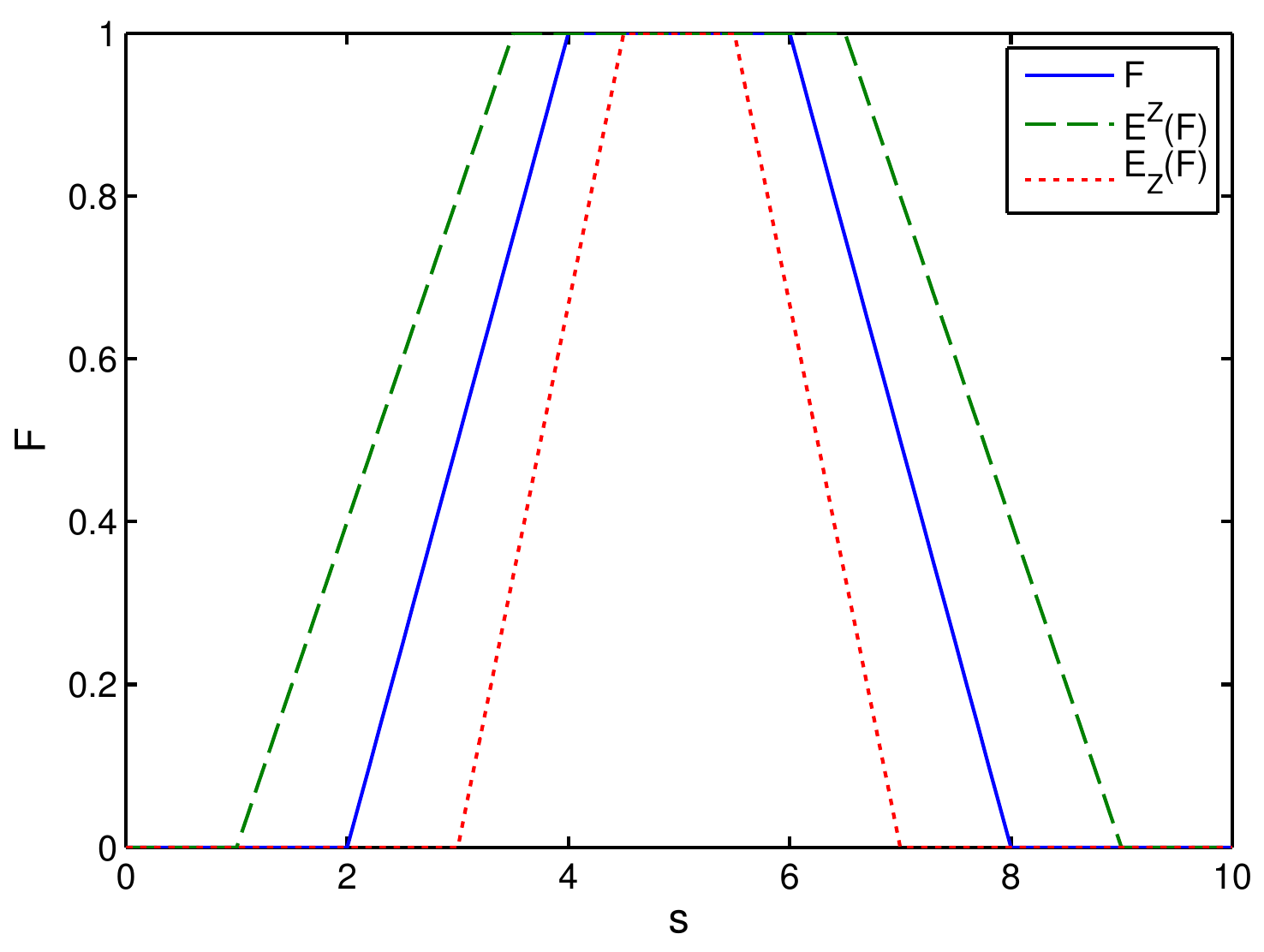}
\caption{Illustration of the expansion and contraction modifiers proposed in \cite{bosc}. The original set $F$ can be described in trapezoidal set notation as $(2,4,6,8)$. The approximate equality function is parametrised by a set $Z = (-1, -0.5,0.5, 1)$. Dilating the set as described in \cite{bosc} gives the set $E^Z(F) = (1,3.5,6.5,9)$. Concentrating the set as described  results in the set $E_Z(F) = (3,4.5, 5.5, 7)$ }
\label{fig:bosc_plot}
\end{figure}

The intuitive idea behind this approach is that if an object $x_1$ resembles another object $x_2$ that is $L$, then $x_1$ can be said to be `quite $L$'. Conversely, object $x_2$ that is $L$ can be said to be `very $L$' only if all the objects $x$ that resemble it can be said to be $L$. This formulation alters both the core and support of the fuzzy set $L$, which has been argued to be a desirable effect.

Following \cite{bosc, decock2002, novak}, we will propose linguistic modifiers that are semantically grounded rather than attempting to show their utility in classifiers, reasoning or to examine the algebra of modifiers. Our approach to linguistic modifiers arises very naturally from the label semantics framework, and the primary result does not require any parameters additional to the original membership function of the concept. We also show similarities between our model and the two detailed above.

\section{Label semantics approach to linguistic hedges}
\label{sec:results}
We present three formulations of linguistic hedges with increasing levels of generality. The first assumes that prototypes are equal. Secondly, we show that an analogue holds where prototypes are not equal, and thirdly that these hold in the case where the second threshold is a function of the first. We go on to show similarities between our model and those of \cite{bosc, decock2002, novak}. Furthermore, we show that hedges are compositional, and look at their behaviour in the limit of composition.

As described in section \ref{sub:lsem}, $LA$ denotes a finite set of labels $\{L_i\}$ that agents use to describe basic categories. $\Omega$ is the underlying domain of discourse, with prototypes $P_i \in \Omega$ and thresholds $\e_i$, drawn from a distribution $\delta_{\e_i}$. As before, the appropriateness $\mu_{L_i}(x) = \Delta_i(d(x, P_i)) = \int_{d(x, P_i)}^\infty \delta_{\e_i}(\e_i)\mathrm{d}\e_i$. We use the notation $L_i = <P_i, d, \delta_{\e_i}>$.

A concept $L_1$ can be narrowed or broadened to a second concept $L_2$ using the linguistic hedges `very' and `quite' respectively, i.e. $L_2$ is defined as `quite $L_1$'. The directed acyclic graph illustrating this dependency is given in figure \ref{fig:quite}. In this case, the threshold $\e_2$ associated with $L_2$ is dependent on $\e_1$ in that $\e_2 \geq \e_1$. In the case of `very', we have that $\e_2 \leq \e_1$.  Essentially, for `quite', we are saying that however wide a margin of certainty we apply the label `tall' with, the margin for `quite tall' will be wider, and conversely for `very'. 

\begin{figure}
  \centering
  \includegraphics[scale = 1]{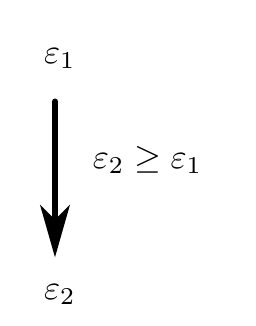}
	  \caption{Directed acyclic graph representing the hedge `quite'. The threshold $\e_2$ is dependent on the threshold $\e_1$ by $\e_2 \geq \e_1$.}
	\label{fig:quite}
\end{figure}

\subsection{Hedges with unmodified prototypes}
\label{sub:basic}
\begin{dfn}[Dilation and Concentration]
A label $L_2 = <P_2, d, \delta_{\e_2}>$  is a \emph{dilation} of a label $L_1 = <P_1, d, \delta_{\e_1}>$ when $\e_2$ is dependent on $\e_1$ such that $\e_2 \geq \e_1$. $L_2$ is a \emph{concentration} of $L_1$ when $\e_2$ is dependent on $\e_1$ such that $\e_2 \leq \e_1$.
\end{dfn}

\begin{thm}[$L_2 = $ quite $L_1$]
\label{thm:dil}
Suppose $L_2 = <P_2, d, \delta_{\e_2}>$ is a dilation of $L_1 = <P_1, d, \delta_{\e_1}>$, so that $\e_2 \geq \e_1$. Suppose also that $P_1 = P_2 = P$, and that the marginal (unconditional) distribution of $\e_2$, before conditioning on the knowledge that $\e_2 \geq \e_1$, is identical to $\delta_{\e_1}$, since $L_2$ is a dilation of $L_1$. Then $\forall x \in \Omega$, $\mu_{L_2}(x) = \mu_{L_1}(x) - \mu_{L_1}(x)\ln(\mu_{L_1}(x))$.
\end{thm}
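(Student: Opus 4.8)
The plan is to reduce the claim to two elementary integrations: first compute the marginal density of the threshold $\e_2$ under the joint law encoded by the hypotheses, then change variables in the appropriateness integral $\mu_{L_2}(x)=\int_{d(x,P)}^\infty\delta_{\e_2}$.

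First I would pin down the joint distribution of $(\e_1,\e_2)$. Following the Bayesian-network structure of figure \ref{fig:quite}, $\e_1$ is the parent and keeps its distribution $\delta_{\e_1}$, while the hypothesis — that the distribution of $\e_2$ is $\delta_{\e_1}$ ``before conditioning on $\e_2\geq\e_1$'' — means the conditional density is $\delta_{\e_2\mid\e_1}(v\mid u)=\delta_{\e_1}(v)/\Delta_1(u)$ for $v\geq u$ and $0$ otherwise, the normalising constant being $\int_u^\infty\delta_{\e_1}=\Delta_1(u)$. Marginalising out $\e_1$ gives $\delta_{\e_2}(v)=\delta_{\e_1}(v)\int_0^v\delta_{\e_1}(u)/\Delta_1(u)\,\mathrm{d}u$. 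Since $\Delta_1'(u)=-\delta_{\e_1}(u)$, the integrand is $-(\ln\Delta_1)'(u)$, so the inner integral equals $\ln\Delta_1(0)-\ln\Delta_1(v)=-\ln\Delta_1(v)$, using $\Delta_1(0)=\int_0^\infty\delta_{\e_1}=1$ (thresholds are non-negative). Hence $\delta_{\e_2}(v)=-\delta_{\e_1}(v)\ln\Delta_1(v)$, which one should check integrates to $1$, confirming it is a genuine density.

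Second, since $P_1=P_2=P$, I would write $\mu_{L_2}(x)=\int_{d(x,P)}^\infty-\delta_{\e_1}(v)\ln\Delta_1(v)\,\mathrm{d}v$ and substitute $w=\Delta_1(v)$, so $\mathrm{d}w=-\delta_{\e_1}(v)\,\mathrm{d}v$ and $w$ runs from $\Delta_1(d(x,P))=\mu_{L_1}(x)$ down to $0$. This turns the integral into $\int_0^{\mu_{L_1}(x)}(-\ln w)\,\mathrm{d}w=\mu_{L_1}(x)-\mu_{L_1}(x)\ln\mu_{L_1}(x)$, using the antiderivative $w-w\ln w$ of $-\ln w$ together with $\lim_{w\to0^+}w\ln w=0$; the degenerate case $\mu_{L_1}(x)=0$ is covered by the same convention $0\ln0=0$, giving $\mu_{L_2}(x)=0$. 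It is worth noting in passing that the resulting map $m\mapsto m-m\ln m$ fixes $m=1$ and sends $(0,1)$ strictly above the diagonal, as a dilation should.

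The main obstacle is the first step: correctly reading off the joint law from the slightly informal hypothesis. One must resist the superficially natural alternative ``take $\e_1,\e_2$ i.i.d.\ from $\delta_{\e_1}$ and condition on $\e_2\geq\e_1$'', which is a genuinely different model — it reweights the marginal of $\e_1$ to $2\delta_{\e_1}(u)\Delta_1(u)$, thereby silently altering $\mu_{L_1}$ itself, and it produces $2\mu_{L_1}(x)-\mu_{L_1}(x)^2$ rather than the stated formula. The reading used above is the one that leaves the base concept $L_1$ untouched and builds $L_2$ on top of it, which is what makes the statement consistent. Beyond that, only mild regularity is needed: that $\delta_{\e_1}$ is an honest density on $[0,\infty)$ with $\Delta_1$ continuous and strictly decreasing on its support, so that the substitution $w=\Delta_1(v)$ is legitimate.
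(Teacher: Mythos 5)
Your proof is correct, and it takes a slightly different route from the paper's. The paper works with the joint density $\delta(\e_1,\e_2)=\delta_{\e_1}(\e_1)\delta_{\e_1}(\e_2)/\Delta_1(\e_1)$ on $\{\e_2\geq\e_1\}$ and evaluates $\mu_{L_2}(x)$ as a double integral, integrating out $\e_2$ first and splitting the $\e_1$-range at $d(x,P)$: the piece with $\e_1\geq d(x,P)$ collapses because $\int_{\e_1}^\infty\delta_{\e_1}(\e_2)\,\mathrm{d}\e_2=\Delta_1(\e_1)$ cancels the normaliser, and the piece with $\e_1<d(x,P)$ produces the factor $\int_0^{d(x,P)}\delta_{\e_1}/\Delta_1=-\ln\mu_{L_1}(x)$. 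You instead integrate out $\e_1$ first, obtaining the explicit marginal $\delta_{\e_2}(v)=-\delta_{\e_1}(v)\ln\Delta_1(v)$, and then finish with the single substitution $w=\Delta_1(v)$. Both arguments pivot on the same identity $\delta_{\e_1}/\Delta_1=-(\ln\Delta_1)'$ together with $\Delta_1(0)=1$, so they are computationally equivalent up to the order of integration; what your version buys is the closed-form density of the dilated threshold as a reusable intermediate object, which is in fact exactly the identity $\delta_{\e_i}(\e_i)=-\delta_{\e_{i-1}}(\e_i)\ln(\Delta_{i-1}(\e_i))$ that the paper later rederives inside the proof of the compositionality theorem, so your route would streamline that argument too. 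Your closing remarks are also sound: the i.i.d.-then-condition reading genuinely is a different model (it reweights the marginal of $\e_1$ and yields $2\mu_{L_1}-\mu_{L_1}^2$), and the regularity you assume (a density on $[0,\infty)$ with $\Delta_1$ continuous and strictly decreasing on its support, plus the convention $0\ln 0=0$) matches what the paper implicitly uses.
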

\begin{proof}
\begin{align*}
	\delta_{\e_2|\e_1}(\e_2|\e_1) &= 
	\begin{cases}
		\frac{\delta_{\e_1}(\e_2)}{\int_{\e_1}^\infty \delta_{\e_1}(\e_2) d\e_2} & \text{if } \e_2 \geq \e_1\\
		0 & \text{otherwise}
	\end{cases} \quad =
	\begin{cases}
		\frac{\delta_{\e_1}(\e_2)}{\Delta_1(\e_1)} & \text{if } \e_2 \geq \e_1\\
		0 & \text{otherwise}
	\end{cases}
\end{align*}

and hence,

\begin{align*}
	\delta(\e_1, \e_2) &= \delta_{\e_1}(\e_1)\delta_{\e_2|\e_1}(\e_2|\e_1)
	= 
	\begin{cases}
		\frac{\delta_{\e_1}(\e_1)\delta_{\e_1}(\e_2)}{\Delta_1(\e_1)} & \text{if } \e_2 \geq \e_1\\
		0 & \text{otherwise}
	\end{cases}
\end{align*}

Then since $\e_2 \geq \e_1$ we have that

\begin{align*} 
	\mu_{L_2}(x) &= \int_0^\infty\int_{max(\e_1, d(x,P))}^\infty \delta(\e_1, \e_2) d\e_2 d\e_1 = \int_0^\infty\int_{max(\e_1, d(x,P))}^\infty \frac{\delta_{\e_1}(\e_1)\delta_{\e_1}(\e_2)}{\Delta_1(\e_1)} d\e_2 d\e_1\\
	&= \int_0^{d(x,P)}\frac{\delta_{\e_1}(\e_1)}{\Delta_1(\e_1)}\int_{d(x,P)}^\infty \delta_{\e_1}(\e_2) d\e_2 d\e_1 + \int_{d(x,P)}^\infty\frac{\delta_{\e_1}(\e_1)}{\Delta_1(\e_1)}\int_{\e_1}^\infty \delta_{\e_1}(\e_2) d\e_2 d\e_1\\
	&= \mu_{L_1}(x)\int_0^{d(x,P)}\frac{\delta_{\e_1}(\e_1)}{\Delta(\e_1)} d\e_1 + \int_{d(x,P)}^\infty \delta_{\e_1}(\e_1) d\e_1 = \mu_{L_1}(x) - \mu_{L_1}(x)\ln(\mu_{L_1}(x))\nonumber
\end{align*}
\end{proof}
The following example gives an illustration of the effect of applying this hedge, in comparison with the standard dilation hedge $(\mu_L(x))^{1/2}$.
\begin{exa}
\label{exa:dil_plot}
Suppose our conceptual space $\Omega = \mathbb{R}$ with Euclidean distance and that a label $L$ has prototype $P = 5$,  and threshold $\e \sim $ Uniform$[0,3]$. Then 
\[
\mu_L(x)  = 
\begin{cases}
1 - \frac{|x - 5|}{3} & \text{ if } x \in [2, 8]\\
0 & otherwise
\end{cases}
\]

We can then form a new label $qL$ with prototype $P_q = P = 5$ and threshold $\e_q \geq \e$. Then, according to theorem \ref{thm:dil}, $\mu_{qL}(x) = \mu_L(x) - \mu_L(x)\ln \mu_L(x)$. The effect of applying a dilation hedge to $L$ can be seen in figure \ref{fig:dil_plot}. The dilation hedge given above is contrasted with Zadeh's dilation hedge $(\mu_L(x))^{1/2}$.

\begin{figure}[htbp]
\centering
\includegraphics[scale = 0.5]{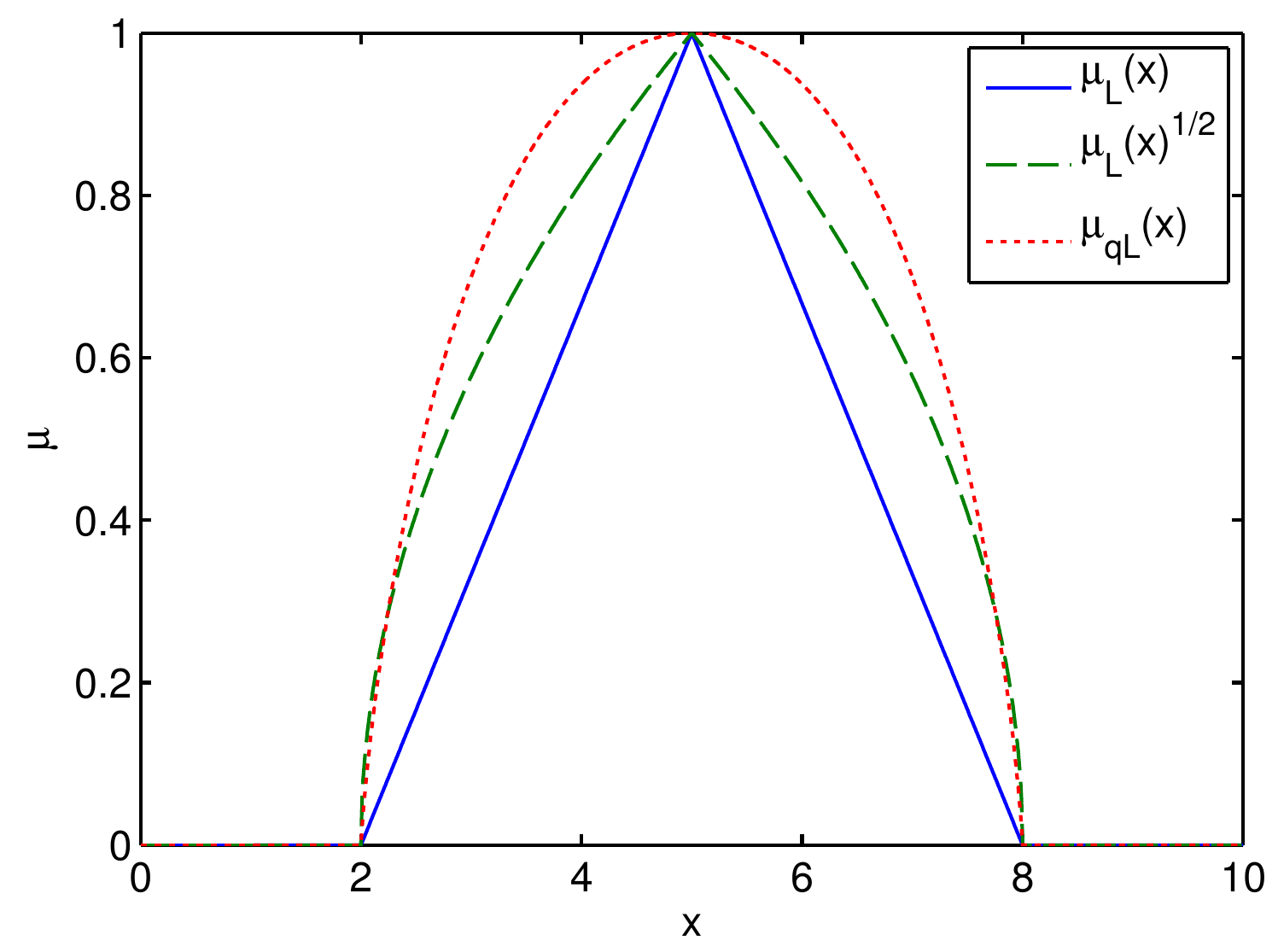}
\caption{Plots of $\mu_L(x)$, $\mu_L(x)^{1/2}$, and $\mu_{qL}(x) = \mu_L(x) - \mu_L(x)\ln \mu_L(x)$. Values of $\mu_{qL}(x)$ near to the prototype $P = 5$ are greater than equivalent values of $\mu_L(x)^{1/2}$.}
\label{fig:dil_plot}
\end{figure}
\end{exa}

\begin{thm}[$L_2 = $ very $L_1$]
\label{thm:con}
Suppose $L_2 = <P_2, d, \delta_{\e_2}>$ is a concentration of $L_1 = <P_1, d, \delta_{\e_1}>$, so that $\e_2 \leq \e_1$. Suppose also that $P_1 = P_2 = P$, and that the marginal (unconditional) distribution of $\e_2$, before conditioning on the knowledge that $\e_2 \leq \e_1$, is identical to $\delta_{\e_1}$, since $L_2$ is a concentration of $L_1$. Then $\forall x \in \Omega$, $\mu_{L_2} = \mu_{L_1}(x)  +  (1-\mu_{L_1}(x))\ln(1 - \mu_{L_1}(x))$.
\end{thm}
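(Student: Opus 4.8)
The plan is to mirror the proof of Theorem~\ref{thm:dil}, replacing the truncation $\e_2 \ge \e_1$ by $\e_2 \le \e_1$ and re-deriving the conditional density accordingly. First I would write down the conditional distribution of $\e_2$ given $\e_1$: since by hypothesis the unconditional law of $\e_2$ is $\delta_{\e_1}$ and we condition on the event $\e_2 \le \e_1$, the normalising constant is $\int_0^{\e_1}\delta_{\e_1}(\e_2)\,\mathrm{d}\e_2 = 1 - \Delta_1(\e_1)$, so that $\delta_{\e_2\mid\e_1}(\e_2\mid\e_1) = \delta_{\e_1}(\e_2)/(1-\Delta_1(\e_1))$ for $\e_2 \le \e_1$ and $0$ otherwise. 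Multiplying by $\delta_{\e_1}(\e_1)$ gives the joint density $\delta(\e_1,\e_2) = \delta_{\e_1}(\e_1)\delta_{\e_1}(\e_2)/(1-\Delta_1(\e_1))$ supported on $\e_2 \le \e_1$.

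Next I would compute $\mu_{L_2}(x)$ as the probability that $d(x,P) \le \e_2$, i.e.\ by integrating $\delta(\e_1,\e_2)$ over the region $\{(\e_1,\e_2) : \e_2 \ge d(x,P),\ \e_2 \le \e_1\}$. Because $\e_2$ is sandwiched between $d(x,P)$ and $\e_1$, the outer variable $\e_1$ ranges over $[d(x,P),\infty)$, and the inner integral $\int_{d(x,P)}^{\e_1}\delta_{\e_1}(\e_2)\,\mathrm{d}\e_2$ equals $\Delta_1(d(x,P)) - \Delta_1(\e_1) = \mu_{L_1}(x) - \Delta_1(\e_1)$. This reduces the problem to the single integral $\int_{d(x,P)}^\infty \delta_{\e_1}(\e_1)\,\dfrac{\mu_{L_1}(x) - \Delta_1(\e_1)}{1 - \Delta_1(\e_1)}\,\mathrm{d}\e_1$.

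Finally I would evaluate this via the substitution $u = \Delta_1(\e_1)$, with $\mathrm{d}u = -\delta_{\e_1}(\e_1)\,\mathrm{d}\e_1$, which sends $\e_1 = d(x,P)$ to $u = \mu_{L_1}(x)$ and $\e_1 = \infty$ to $u = 0$; writing $m = \mu_{L_1}(x)$ the integral becomes $\int_0^m \frac{m-u}{1-u}\,\mathrm{d}u$, and the decomposition $\frac{m-u}{1-u} = 1 - \frac{1-m}{1-u}$ yields $m + (1-m)\ln(1-m)$, which is the claimed formula. I do not anticipate a serious obstacle: the only points requiring care are getting the normalising constant right (it is $1 - \Delta_1(\e_1)$ here, in contrast to $\Delta_1(\e_1)$ in the dilation case), correctly pinning down the domain of integration so that $\e_1 \ge d(x,P)$, and checking the boundary behaviour of the substitution — in particular that $\Delta_1(\infty) = 0$ and that $(1-m)\ln(1-m) \to 0$ as $m \to 1$, so that the expression extends continuously to points with $\mu_{L_1}(x) = 1$ (the core being left unchanged).
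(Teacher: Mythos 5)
Your proposal is correct and follows essentially the same route as the paper: the same truncated conditional density $\delta_{\e_1}(\e_2)/(1-\Delta_1(\e_1))$ on $\e_2\le\e_1$, the same joint density, and the same integration region $\{\e_1\ge d(x,P),\ d(x,P)\le\e_2\le\e_1\}$. The only cosmetic difference is in the last step, where you evaluate $\int_{d(x,P)}^\infty \delta_{\e_1}(\e_1)\frac{\mu_{L_1}(x)-\Delta_1(\e_1)}{1-\Delta_1(\e_1)}\,\mathrm{d}\e_1$ by the substitution $u=\Delta_1(\e_1)$, whereas the paper splits the inner integral and recognises $\int_{d(x,P)}^\infty \frac{\delta_{\e_1}(\e_1)}{1-\Delta_1(\e_1)}\,\mathrm{d}\e_1 = -\ln(1-\mu_{L_1}(x))$ directly; both yield $\mu_{L_1}(x)+(1-\mu_{L_1}(x))\ln(1-\mu_{L_1}(x))$.
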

\begin{proof}
\begin{align*}
	\delta_{\e_2|\e_1}(\e_2|\e_1) &= 
	\begin{cases} 
		\frac{\delta_{\e_1}(\e_2)}{\int_0^{\e_1} \delta_{\e_1}(\e_2) d\e_2} & \text{if } \e_2 \leq \e_1\\
		0 & \text{otherwise}
	\end{cases} =
	\begin{cases}
		\frac{\delta_{\e_1}(\e_2)}{1-\Delta_1(\e_1)} & \text{if } \e_2 \leq \e_1\\
		0 & \text{otherwise}
	\end{cases}
\end{align*}

and hence,

\begin{align*}
	\delta(\e_1, \e_2) &= \delta_{\e_1}(\e_1)\delta_{\e_2|\e_1}(\e_2|\e_1)
	= 
	\begin{cases}
		\frac{\delta_{\e_1}(\e_1)\delta_{\e_1}(\e_2)}{1-\Delta_1(\e_1)} & \text{if } \e_2 \leq \e_1\\
		0 & \text{otherwise}
	\end{cases}
\end{align*}

So since $\e_2 \leq \e_1$ we have that:

\begin{align*}
	 \mu_{L_2}(x) &= \int_0^\infty \int_{min(\e_1, d(x,P))}^{\e_1} \delta(\e_1, \e_2) d\e_2 d\e_1 = \int_0^\infty\int_{min(\e_1, d(x,P))}^{\e_1} \frac{\delta_{\e_1}(\e_1)\delta_{\e_1}(\e_2)}{1-\Delta_1(\e_1)} d\e_2 d\e_1\\
	&= \int_{d(x,P)}^\infty \frac{\delta_{\e_1}(\e_1)}{1 - \Delta_1(\e_1)} \int_{d(x,P)}^{\e_1} \delta_{\e_1}(\e_2) d\e_2 d\e_1\\
	&= \int_{d(x,P)}^\infty \frac{\delta_{\e_1}(\e_1)}{1 - \Delta_1(\e_1)} \left( \int_0^{\e_1} \delta_{\e_1}(\e_2) d\e_2 -  \int_0^{d(x,P)} \delta_{\e_1}(\e_2) d\e_2 \right) d\e_1\\ 
	&= \mu_{L_1}(x)  -  (1-\mu_{L_1}(x))\int_{d(x,P)}^\infty \frac{\delta_{\e_1}(\e_1)}{1 - \Delta_1(\e_1)} d\e_1 = \mu_{L_1}(x)  +  (1-\mu_{L_1}(x))\ln(1 - \mu_{L_1}(x))
\end{align*}

\end{proof}

The effect of these hedges are illustrated in the following example.

\begin{exa}
Suppose the label $L$ is as described in example \ref{exa:dil_plot}. We can form a new label $vL$ with prototype $P_v = P = 5$ and threshold $\e_v \leq \e$. Then, according to theorem \ref{thm:con}, $\mu_{vL}(x) = \mu_L(x) + (1 - \mu_L(x))\ln(1- \mu_L(x))$  The effect of applying this contraction hedge is seen in figure \ref{fig:con_plot}, and again, this concentration hedge is contrasted with Zadeh's concentration hedge $(\mu_L(x))^2$.

\begin{figure}[htbp]
\centering
\includegraphics[scale = 0.5]{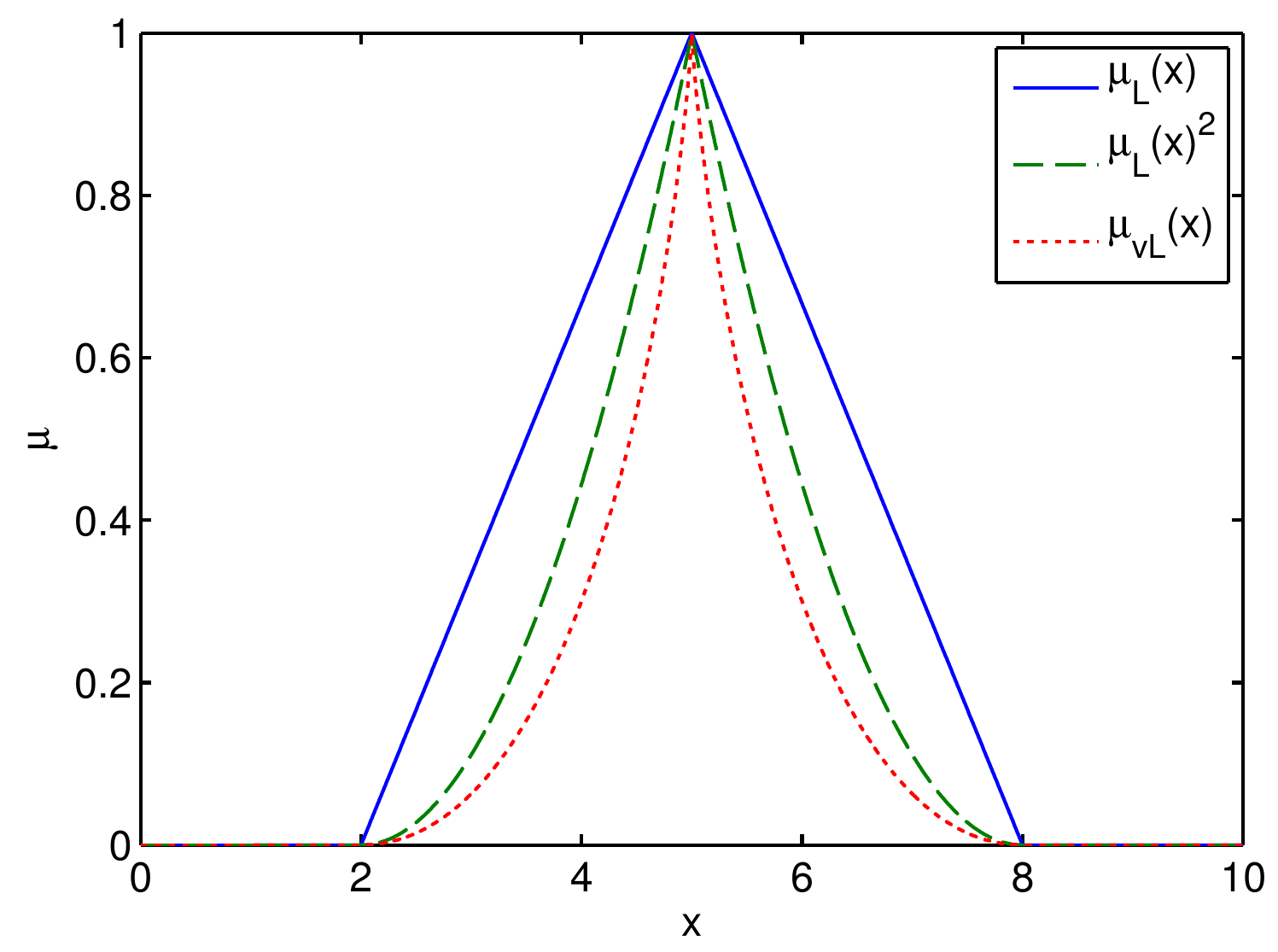}
\caption{Plots of $\mu_L(x)$, $\mu_L(x)^2$, and $\mu_{vL}(x) = \mu_L(x) + (1 - \mu_L(x))\ln(1- \mu_L(x))$. Values of $\mu_{vL}(x)$ decrease more quickly as we move from the prototype $P = 5$ than do equivalent values of $\mu_L(x)^2$.}
\label{fig:con_plot}
\end{figure}
\end{exa}

These hedges can also be applied across multiple dimensions, demonstrated in the example below.
\begin{exa}
Suppose we have two labels `tall' and `thin'. `Tall' has prototype $P_{\text{tall}} = 6.5$ft and `thin' has prototype $P_{\text{thin}} = 24$in. The appropriateness of each label is defined by:
\[ 
\mu_{\text{tall}}(x_1) = 
\begin{cases}
1 & if x_1 > 6.5\\
1 - (P_{\text{tall}} - x_1) & if x_1 \in [5.5, 6.5]\\
0 & \text{otherwise}
\end{cases}
\]

where the variable $x_1$ measures height

\[ 
\mu_{\text{thin}}(x_2) = 
\begin{cases}
1 & if x_2 < 24\\
1 - \frac{(x_2 - P_{\text{thin}})}{4} & if x_2 \in [24, 28]\\
0 & \text{otherwise}
\end{cases}
\]

where $x_2$ measures waist size.

Suppose further that being tall and being thin are independent of each other. The appropriateness of the label `tall and thin' could then be defined by:

\[
\mu_{\text{tall and thin}}(x_1, x_2) = \mu_{\text{tall}}(x_1)\mu_{\text{thin}}(x_2)
\]

If `tall' and `thin' are independent, we can treat their hedges separately, so the appropriateness of a label `very tall and quite thin' is:

\begin{align*}
\mu_{\text{very tall and quite thin}}(x_1, x_2) &= \mu_{\text{very tall}}(x_1)\mu_{\text{quite thin}}(x_2)\\
& = (\mu_{\text{tall}}(x_1)+ (1 - \mu_{\text{tall}}(x_1)\ln(1 - \mu_{\text{tall}}(x_1))) (\mu_{\text{thin}}(x_2) -  \mu_{\text{thin}}(x_2)\ln(\mu_{\text{thin}}(x_2)))
\end{align*}

This is illustrated in figure \ref{fig:vtqt}

\begin{figure}[htbp]
\centering
\includegraphics[scale = 0.5]{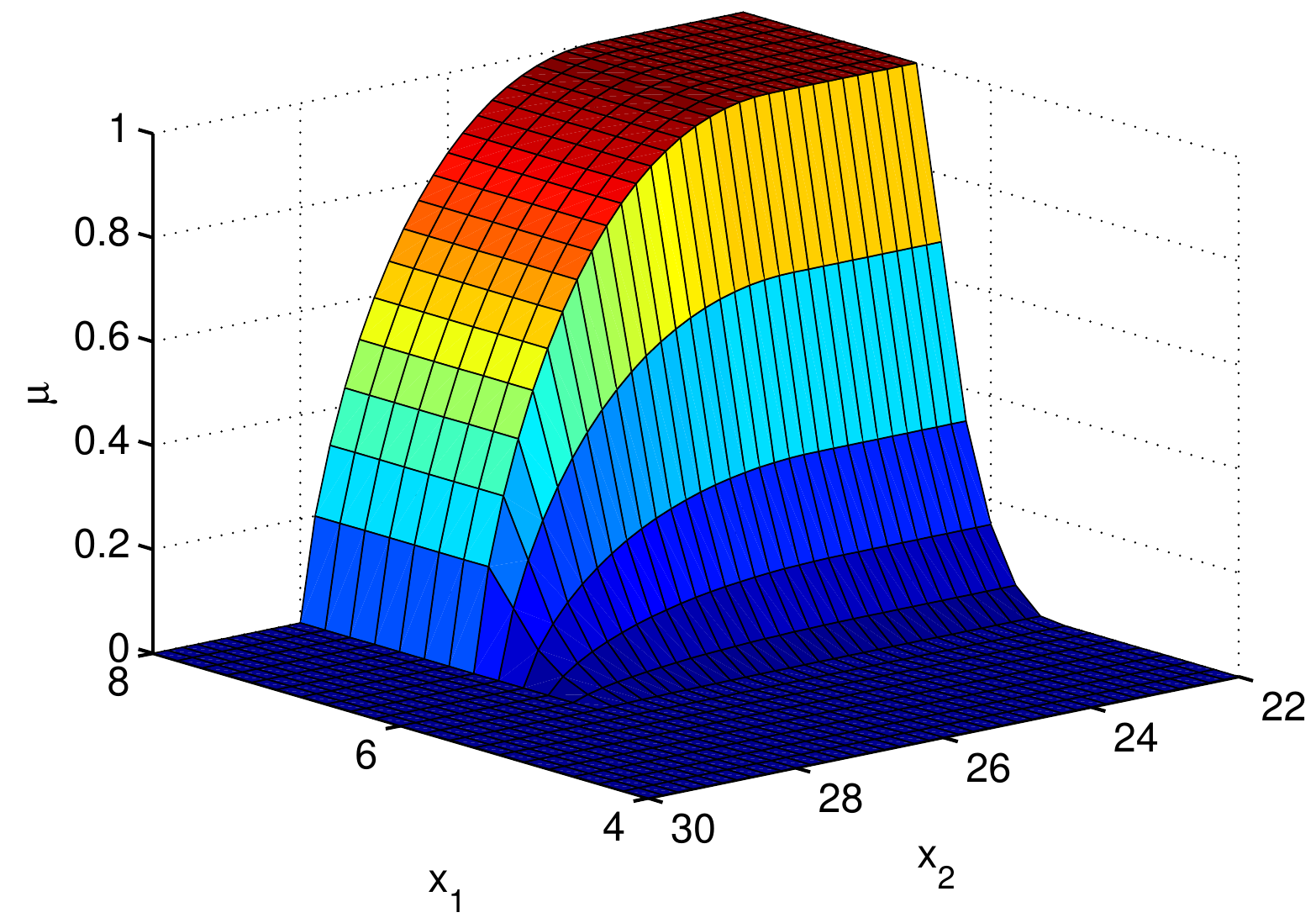}
\caption{Plot of $\mu_{\text{very tall and quite thin}}(x_1, x_2) = \mu_{\text{very tall}}(x_1)\mu_{\text{quite thin}}(x_2)$. Notice how appropriateness drops off quickly as the variable $x_1$, i.e. height, decreases.  In contrast, appropriateness decreases slowly as  we increase the variable $x_2$, or waist size}
\label{fig:vtqt}
\end{figure}
\end{exa}

\subsection{Hedges with differing prototypes}
\label{sub:diff}
As they stand, the hedges proposed leave the core and support of the fuzzy sets unchanged, which is often argued to be undesirable \cite{bosc, boscempty, novak, marin}. A slight modification yields models of hedges in which the core, or prototype, of the concept has been changed.

\begin{thm}[Dilation]
Suppose that $L_2 = $ quite $L_1$, as in theorem \ref{thm:dil}, but that $P_2 \neq P_1$. Then  $\mu_{L_2}(x) = \Delta_1(d(x,P_2)) - \Delta_1(d(x, P_2))\ln(\Delta_1(d(x, P_2)))$.
\end{thm}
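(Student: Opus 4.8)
The plan is to re-run the calculation from the proof of Theorem~\ref{thm:dil} essentially verbatim, observing that the only object that changes when $P_2 \neq P_1$ is the event whose probability defines $\mu_{L_2}(x)$. First I would note that the derivation of the conditional density $\delta_{\e_2 \mid \e_1}$, and hence of the joint density
\[
\delta(\e_1,\e_2) = \begin{cases} \dfrac{\delta_{\e_1}(\e_1)\,\delta_{\e_1}(\e_2)}{\Delta_1(\e_1)} & \text{if } \e_2 \geq \e_1 \\ 0 & \text{otherwise,} \end{cases}
\]
goes through unchanged: it uses only that the marginal of $\e_2$ is $\delta_{\e_1}$ and that $\e_2 \geq \e_1$, neither of which mentions a prototype.

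Next I would write $\mu_{L_2}(x) = P(\e_2 : d(x,P_2) \leq \e_2)$ and integrate $\delta(\e_1,\e_2)$ over the region $\{\e_2 \geq \max(\e_1, d(x,P_2))\}$, splitting the outer integral at $\e_1 = d(x,P_2)$ exactly as in Theorem~\ref{thm:dil}. This gives
\[
\mu_{L_2}(x) = \Delta_1(d(x,P_2)) \int_0^{d(x,P_2)} \frac{\delta_{\e_1}(\e_1)}{\Delta_1(\e_1)}\, d\e_1 \;+\; \int_{d(x,P_2)}^\infty \delta_{\e_1}(\e_1)\, d\e_1 .
\]
The second term equals $\Delta_1(d(x,P_2))$ by definition of $\Delta_1$. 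For the first, I would use $\frac{d}{d\e}\Delta_1(\e) = -\delta_{\e_1}(\e)$, so that $\delta_{\e_1}(\e_1)/\Delta_1(\e_1) = -\frac{d}{d\e_1}\ln\Delta_1(\e_1)$, whence the integral evaluates to $\ln\Delta_1(0) - \ln\Delta_1(d(x,P_2)) = -\ln\Delta_1(d(x,P_2))$, using $\Delta_1(0) = \int_0^\infty \delta_{\e_1} = 1$. Combining the two terms yields precisely $\Delta_1(d(x,P_2)) - \Delta_1(d(x,P_2))\ln\Delta_1(d(x,P_2))$.

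There is no real obstacle: the argument is the same computation as Theorem~\ref{thm:dil} with $d(x,P)$ replaced throughout by $d(x,P_2)$. The one point to watch is that, unlike in Theorem~\ref{thm:dil}, the answer cannot be re-expressed through $\mu_{L_1}(x) = \Delta_1(d(x,P_1))$, since the relevant distance is now to the \emph{new} prototype $P_2$; this is exactly why the statement is phrased in terms of $\Delta_1(d(x,P_2))$. I would also double-check that the split of the region of integration nowhere implicitly assumed $P_1 = P_2$ (it does not — it only separates $\e_1 \le d(x,P_2)$ from $\e_1 > d(x,P_2)$), and that the monotonicity and normalisation facts about $\Delta_1$ used above hold under the paper's standing assumptions on $\delta_{\e_1}$.
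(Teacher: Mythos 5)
Your proposal is correct and is essentially the paper's own argument: the paper proves this theorem by the one-line instruction to substitute $\Delta_1(d(x,P_2))$ for $\mu_{L_1}(x)$ throughout the proof of Theorem \ref{thm:dil}, which is exactly the substitution you carry out (with the integral evaluation helpfully made explicit).
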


\begin{proof}
 Substitute $\Delta_1(d(x, P_2))$ for $\mu_{L_1}(x)$ throughout proof of theorem \ref{thm:dil}
\end{proof}

\begin{thm}[Concentration]
Suppose that $L_2 = $ very $L_1$, as in theorem \ref{thm:con}, but that $P_2 \neq P_1$. Then  $\mu_{L_2}(x) = \Delta_1(d(x,P_2)) + (1 - \Delta_1(d(x, P_2)))\ln(1 - \Delta_1(d(x, P_2)))$.
\end{thm}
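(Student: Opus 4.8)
The plan is to run the proof of Theorem~\ref{thm:con} essentially verbatim, with every occurrence of $\mu_{L_1}(x)$ replaced by $\Delta_1(d(x,P_2))$ -- precisely the substitution already used to pass from Theorem~\ref{thm:con} to the differing-prototype dilation theorem. The key observation is that dropping the hypothesis $P_1=P_2$ does not alter the joint density $\delta(\e_1,\e_2)$ at all: that density was constructed only from the marginal $\delta_{\e_1}$, the structural assumption that the unconditional law of $\e_2$ equals $\delta_{\e_1}$, and the dependency $\e_2\le\e_1$, none of which mentions a prototype. So we still have $\delta(\e_1,\e_2)=\delta_{\e_1}(\e_1)\delta_{\e_1}(\e_2)/(1-\Delta_1(\e_1))$ on $\{\e_2\le\e_1\}$ and $0$ elsewhere.

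What changes is the event being measured. Now $x\in\mathcal{N}^{\e_2}_{L_2}$ holds iff $d(x,P_2)\le\e_2$, so
\[
\mu_{L_2}(x)=\int_0^\infty\int_{\min(\e_1,\,d(x,P_2))}^{\e_1}\delta(\e_1,\e_2)\,\mathrm{d}\e_2\,\mathrm{d}\e_1 ,
\]
that is, every integration limit $d(x,P)$ appearing in the proof of Theorem~\ref{thm:con} is simply replaced by $d(x,P_2)$; in particular the inner integral is non-empty only for $\e_1\ge d(x,P_2)$, so the outer integral restricts to that range. Repeating the same manipulations -- writing $\int_{d(x,P_2)}^{\e_1}\delta_{\e_1}(\e_2)\,\mathrm{d}\e_2 = \int_0^{\e_1}\delta_{\e_1}(\e_2)\,\mathrm{d}\e_2 - \int_0^{d(x,P_2)}\delta_{\e_1}(\e_2)\,\mathrm{d}\e_2$, identifying the first piece with $1-\Delta_1(\e_1)$ and the second with $1-\Delta_1(d(x,P_2))$, and finally evaluating $\int_{d(x,P_2)}^\infty \delta_{\e_1}(\e_1)/(1-\Delta_1(\e_1))\,\mathrm{d}\e_1$ by the substitution $u=1-\Delta_1(\e_1)$, which yields $-\ln\bigl(1-\Delta_1(d(x,P_2))\bigr)$ -- gives
\[
\mu_{L_2}(x)=\Delta_1(d(x,P_2))+\bigl(1-\Delta_1(d(x,P_2))\bigr)\ln\bigl(1-\Delta_1(d(x,P_2))\bigr),
\]
as claimed.

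The one point that deserves a check -- and the only place where one might fear that $P_1\neq P_2$ causes difficulty -- is that $d(x,P_1)$ must not enter the computation of $\mu_{L_2}(x)$ on its own. It does not: in the proof of Theorem~\ref{thm:con} every prototype-dependent integration limit comes either from the constraint $x\in\mathcal{N}^{\e_2}_{L_2}$ or from the induced non-emptiness condition on the $\e_2$-interval, and never from $x\in\mathcal{N}^{\e_1}_{L_1}$ (the appropriateness of $L_1$ is not what we are computing). Hence the replacement $d(x,P)\mapsto d(x,P_2)$ is uniform and unambiguous, and beyond this bookkeeping there is no real obstacle: the Fubini interchange and the change of variables are exactly as in Theorem~\ref{thm:con}.
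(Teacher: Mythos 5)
Your proof is correct and is essentially the paper's own argument: the paper's proof consists precisely of the instruction to substitute $\Delta_1(d(x,P_2))$ for $\mu_{L_1}(x)$ (equivalently, $d(x,P_2)$ for $d(x,P)$ in the integration limits) throughout the proof of Theorem \ref{thm:con}, which is what you carry out, with the added and correct observation that the joint density $\delta(\e_1,\e_2)$ is unaffected by dropping $P_1=P_2$.
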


\begin{proof}
As above.
\end{proof}

\begin{cor}
If $\e_2 \geq \e_1$ and $P_2 \supseteq P_1$ then $\mu_{L_2}(x) \geq \mu_{L_1}(x) - \mu_{L_1}(x)\ln(\mu_{L_1}(x))$, and if $\e_2 \leq \e_1$ and $P_2 \subseteq P_1$, then $\mu_{L_2}(x) \leq \mu_{L_1}(x) + (1-\mu_{L_1}(x))\ln(1-\mu_{L_1}(x))$.
\end{cor}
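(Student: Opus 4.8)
The plan is to reduce the corollary to the explicit formulae already obtained for the case of differing prototypes, and from there to a pair of one‑variable monotonicity facts. First I would invoke the two preceding theorems (Dilation and Concentration with $P_2 \neq P_1$): under $\e_2 \geq \e_1$ we have $\mu_{L_2}(x) = g\bigl(\Delta_1(d(x,P_2))\bigr)$ with $g(t) := t - t\ln t$, and under $\e_2 \leq \e_1$ we have $\mu_{L_2}(x) = h\bigl(\Delta_1(d(x,P_2))\bigr)$ with $h(t) := t + (1-t)\ln(1-t)$. In both cases the marginal distribution of $\e_2$ is taken equal to $\delta_{\e_1}$, so the same function $\Delta_1$ governs $L_1$ and $L_2$; in particular $\mu_{L_1}(x) = \Delta_1(d(x,P_1))$.

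Second, I would compare the arguments of $g$ and $h$. Since $d(x,P) = \min\{d(x,y) : y \in P\}$, enlarging the prototype set can only decrease this quantity: $P_2 \supseteq P_1$ gives $d(x,P_2) \leq d(x,P_1)$, and $P_2 \subseteq P_1$ gives $d(x,P_2) \geq d(x,P_1)$. Because $\Delta_1(d) = \int_d^\infty \delta_{\e_1}(\e_1)\,\mathrm{d}\e_1$ is a non‑increasing function of $d$ (it is a tail integral of a density), this yields $\Delta_1(d(x,P_2)) \geq \mu_{L_1}(x)$ in the dilation case and $\Delta_1(d(x,P_2)) \leq \mu_{L_1}(x)$ in the concentration case.

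Third, I would check that $g$ and $h$ are non‑decreasing on $[0,1]$: on $(0,1)$ one has $g'(t) = -\ln t > 0$ and $h'(t) = -\ln(1-t) > 0$, and the endpoints ($t=0$ for $g$, $t=1$ for $h$) are handled by continuity via $t\ln t \to 0$ and $(1-t)\ln(1-t)\to 0$. Composing with the second step then gives $\mu_{L_2}(x) = g(\Delta_1(d(x,P_2))) \geq g(\mu_{L_1}(x)) = \mu_{L_1}(x) - \mu_{L_1}(x)\ln(\mu_{L_1}(x))$ in the dilation case, and $\mu_{L_2}(x) = h(\Delta_1(d(x,P_2))) \leq h(\mu_{L_1}(x)) = \mu_{L_1}(x) + (1-\mu_{L_1}(x))\ln(1-\mu_{L_1}(x))$ in the concentration case, which is exactly the claim.

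I do not expect a genuine obstacle here; the argument is essentially bookkeeping built from monotone maps. The only points needing care are getting the direction of the distance inequality right (a minimum over a larger index set is smaller, so $P_2 \supseteq P_1$ \emph{decreases} $d(x,P_2)$) and the monotonicity of $g$ and $h$ at the endpoints, where $\ln$ is singular and one must pass to the limit rather than simply evaluate the derivative.
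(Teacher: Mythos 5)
Your proposal is correct and follows essentially the same route as the paper: reduce to the differing-prototype formulae, note that $P_2 \supseteq P_1$ (resp. $P_2 \subseteq P_1$) makes $d(x,P_2) \leq d(x,P_1)$ (resp. $\geq$) so that $\Delta_1(d(x,P_2)) \geq \mu_{L_1}(x)$ (resp. $\leq$), and conclude. The only difference is that you explicitly verify the monotonicity of $t \mapsto t - t\ln t$ and $t \mapsto t + (1-t)\ln(1-t)$ on $[0,1]$, a step the paper's ``Hence'' leaves implicit.
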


\begin{proof}
 $\mu_{L_2}(x) = \Delta_1(d(x,P_2)) - \Delta_1(d(x, P_2))\ln(\Delta_1(d(x, P_2)))$, but since $P_2 \supseteq P_1$, $d(x, P_2) \leq d(x, P_1)$ $\forall x \in \Omega$, and so $\Delta_1(d(x, P_2)) \geq \Delta_1(d(x, P_1)) = \mu_{L_1}(x)$ $\forall x \in \Omega$. Hence, $\mu_{L_2}(x) \geq \mu_{L_1}(x) - \mu_{L_1}(x)\ln(\mu_{L_1}(x))$. A similar argument shows that $\mu_{L_2}(x) \leq \mu_{L_1}(x) + (1-\mu_{L_1}(x))\ln(1-\mu_{L_1}(x))$.
\end{proof}

\begin{exa}
\label{exa:exp_dil_plot}
Suppose our conceptual space $\Omega = \mathbb{R}$ with Euclidean distance and that a label $L$ has prototype $P = [4.5,5.5]$,  and threshold $\e \sim $ Uniform$[0,3]$. Then 
\[
\mu_L(x)  = 
\begin{cases}
1 & \text{ if } x \in [4.5,5.5]\\
1 - \frac{|x - 5|}{3} & \text{ if } x \in [1.5, 4.5] \text{ or } x \in [5.5, 8.5] \\
0 & otherwise
\end{cases}•
\]

We form the concept $qL$ by setting the prototype to be $P_q = [4,6]$, and $\e_q \geq \e$. The effect of applying our dilation hedge is illustrated in figure \ref{fig:exp_dil_plot}.

\begin{figure}[htbp]
\centering
\includegraphics[scale = 0.5]{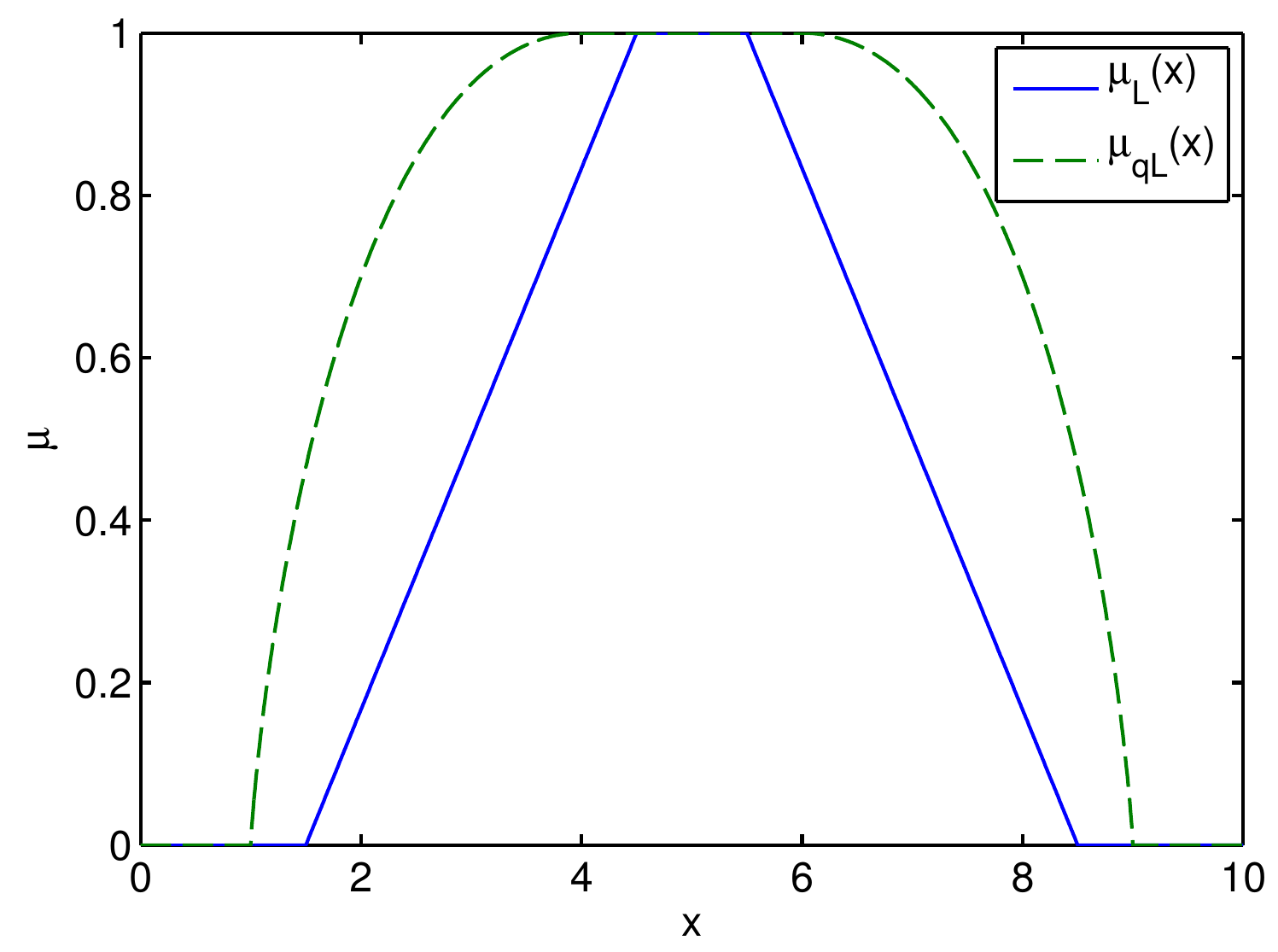}
\caption{Plots of $\mu_L(x)$ and $\mu_{qL}(x) = \Delta(d(x,P_q)) - \Delta(d(x, P_q))\ln(\Delta(d(x, P_q)))$. The prototype of $L$ has been expanded.}
\label{fig:exp_dil_plot}
\end{figure}

Conversely, suppose that a label $L$ has prototype $P = [4, 6]$ and threshold $\e \sim $ Uniform$[0,3]$. Then 
\[
\mu_L(x)  = 
\begin{cases}
1 & \text{ if } x \in [4,6]\\
1 - \frac{|x - 5|}{3} & \text{ if } x \in [1, 4] \text{ or } x \in [6, 9]\\
0 & otherwise
\end{cases}•
\]

We now form the concept $vL$ by contracting the prototype to $P_v = [4.5,5.5]$ and setting $\e_v \leq \e$. The effect of applying the contraction hedge is illustrated in figure \ref{fig:con_con_plot}.

\begin{figure}[htbp]
\centering
\includegraphics[scale = 0.5]{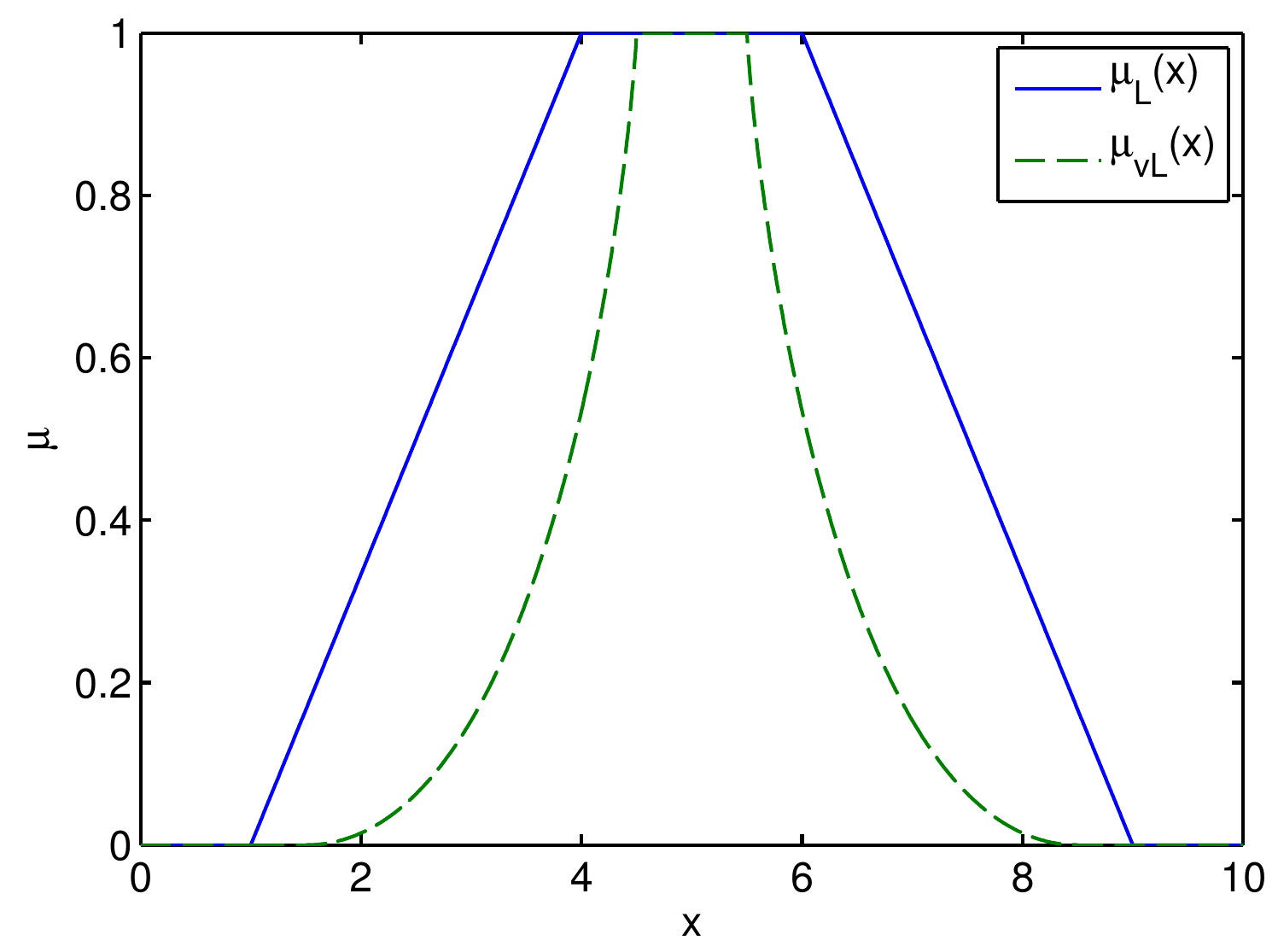}
\caption{Plots of $\mu_L(x)$ and $\mu_{vL}(x) = \Delta(d(x,P_v) + (1 - \Delta(d(x, P_v))\ln(1 - \Delta(d(x, P_v)))$. The prototype of $L$ has been reduced.}
\label{fig:con_con_plot}
\end{figure}

\end{exa}

\subsection{Functions of thresholds}
\label{sub:f}
It may be the case that the threshold of a given concept is greater than or less than a function of the original threshold. This could hold when a hedged concept is a very expanded or restricted version of the original concept, such as when the hedge `loosely' or `extremely' is used. Our formulae can also take account of this.

\begin{thm}
\label{thm:f}
Suppose $L_2 = <P_2, d, \delta_{\e_2}>$ is a dilation of $L_1 = <P_1, d, \delta_{\e_1}>$ with $P_2 \neq P_1$ and $\e_2 \geq f(\e_1)$,  where $f: \mathbb{R} \rightarrow \mathbb{R}$ is strictly increasing or decreasing. Then 
\[
\mu_{L_2}(x) = \Delta_1(f^{-1}(d(x,P_2))) - \Delta_1(f^{-1}(d(x,P_2)))\ln(\Delta_1(f^{-1}(d(x,P_2))))
\]
\end{thm}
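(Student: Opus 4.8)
The plan is to reduce this to the dilation theorem for differing prototypes proved at the start of Section~\ref{sub:diff}, via a change of variable in the base threshold. Write $\e_1' := f(\e_1)$ and let $\delta_{\e_1'}$ be the distribution of $\e_1'$, i.e.\ the pushforward of $\delta_{\e_1}$ under $f$. Since $f$ is strictly monotone it is a bijection onto its range, so this pushforward is well defined, and the condition $\e_2 \geq f(\e_1)$ is literally the condition $\e_2 \geq \e_1'$. Hence $L_2$ is a dilation, in the sense of the basic Definition, of the label $L_1' := \langle P_1, d, \delta_{\e_1'}\rangle$, with threshold inequality $\e_2 \geq \e_1'$ and with prototype $P_2$ possibly distinct from $P_1$.

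Next I would invoke the dilation-with-differing-prototypes theorem applied to $L_1'$ and $L_2$. Writing $\Delta_{1'}(d) := \int_d^\infty \delta_{\e_1'}(\e)\,\mathrm{d}\e$ for the associated $\Delta$-function, that theorem gives
\[
\mu_{L_2}(x) = \Delta_{1'}(d(x,P_2)) - \Delta_{1'}(d(x,P_2))\ln\bigl(\Delta_{1'}(d(x,P_2))\bigr).
\]
If one prefers not to quote it, the same identity follows by repeating the computation in the proof of Theorem~\ref{thm:dil} verbatim with $f(\e_1)$ in place of $\e_1$ in the lower limit of the inner integral and then substituting $u = f(\e_1)$; this substitution, being a bijection precisely because $f$ is strictly monotone, is the only place monotonicity is really used.

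It then remains to identify $\Delta_{1'}$ in terms of $\Delta_1$. For $f$ strictly increasing, $\Delta_{1'}(d) = P(\e_1' \geq d) = P\bigl(f(\e_1)\geq d\bigr) = P\bigl(\e_1 \geq f^{-1}(d)\bigr) = \Delta_1\bigl(f^{-1}(d)\bigr)$, and putting $d = d(x,P_2)$ yields exactly the claimed formula. The corresponding contraction statement follows by the identical argument starting from Theorem~\ref{thm:con}.

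The main (indeed only) subtlety is the bookkeeping around the direction of monotonicity: when $f$ is strictly \emph{decreasing} the same computation gives $\Delta_{1'}(d) = P(\e_1 \leq f^{-1}(d)) = 1 - \Delta_1(f^{-1}(d))$ rather than $\Delta_1(f^{-1}(d))$, so as stated the formula is correct for increasing $f$, with the decreasing case obtained by replacing every occurrence of $\Delta_1(f^{-1}(\,\cdot\,))$ by $1 - \Delta_1(f^{-1}(\,\cdot\,))$. One should also note the minor domain caveat that $f^{-1}(d(x,P_2))$ is to be read on the range of $f$, with $\mu_{L_2}(x)$ taking the boundary value $1$ (respectively $0$) when $d(x,P_2)$ lies below (respectively above) that range.
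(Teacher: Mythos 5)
Your proof is correct and follows essentially the same route as the paper: introduce the auxiliary threshold $\e' = f(\e_1)$ so that $\e_2 \geq \e'$, apply the differing-prototypes dilation theorem to get $\mu_{L_2}(x)$ in terms of the auxiliary $\Delta$, and then rewrite that $\Delta$ as $\Delta_1(f^{-1}(\cdot))$ using strict monotonicity. Your caveat about strictly decreasing $f$ is a fair observation rather than a defect of your argument: the paper's own conversion step, $P(d(x,P)\leq \e) = P(f^{-1}(d(x,P))\leq \e_1)$, likewise tacitly assumes $f$ is increasing, so the displayed formula as written holds in that case, with the decreasing case requiring the $1-\Delta_1(f^{-1}(\cdot))$ modification you describe.
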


\begin{proof}
Rewrite $\e_2 \geq f(\e_1)$ as $\e_2 \geq \e = f(\e_1)$, where $\e \sim \delta$ and is associated with a label $L$ with prototype $P$. Then:
\[
	\mu_{L_2}(x) = \Delta(d(x,P_2)) - \Delta(d(x, P_2))\ln(\Delta(d(x, P_2)))
\]
as above

 Since $f: \mathbb{R} \rightarrow \mathbb{R}$ is strictly monotone, $f^{-1}$ exists, and $\Delta(d(x, P)) = P(d(x,P) \leq \e) = P(f^{-1}(d(x,P)) \leq \e_1) = \Delta_1(f^{-1}(d(x,P)))$.

So
\[
\mu_{L_2}(x) = \Delta_1(f^{-1}(d(x,P_2))) - \Delta_1(f^{-1}(d(x,P_2)))\ln(\Delta_1(f^{-1}(d(x,P_2))))
\]

as required.
\end{proof}

\begin{thm}
Suppose $L_2 = <P_2, d, \delta_{\e_2}>$ is a concentration of $L_1 = <P_1, d, \delta_{\e_1}>$ with $P_2 \neq P_1$ and $\e_2 \leq f(\e_1)$,  where $f: \mathbb{R} \rightarrow \mathbb{R}$ is strictly increasing or decreasing. Then 
\[
\mu_{L_2}(x) = \Delta_1(f^{-1}(d(x,P_2))) +(1- \Delta_1(f^{-1}(d(x,P_2))))\ln(1 -\Delta_1(f^{-1}(d(x,P_2))))
\]
\end{thm}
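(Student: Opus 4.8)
The plan is to mirror the proof of Theorem~\ref{thm:f} line for line, replacing the differing-prototypes dilation result with the differing-prototypes concentration result (the theorem immediately preceding the Corollary), whose conclusion has exactly the shape $\mu + (1-\mu)\ln(1-\mu)$. First I would absorb the function $f$ into a fresh auxiliary label: writing $\e = f(\e_1)$, the push-forward of $\delta_{\e_1}$ under $f$ is a density $\delta$, and I set $L = <P, d, \delta>$ for an arbitrary prototype $P$ (its value is irrelevant, since $P_2$ will play the role of the genuinely shifted prototype). The standing assumption that the unconditional law of $\e_2$ coincides with that of $L$ is inherited from the hypotheses, and the condition $\e_2 \le f(\e_1)$ now reads simply $\e_2 \le \e$, i.e.\ $L_2$ is a concentration of $L$ with $P_2 \ne P$.

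Second, I would apply the differing-prototypes concentration theorem to the pair $(L, L_2)$, which yields at once
\[
\mu_{L_2}(x) = \Delta(d(x,P_2)) + \bigl(1 - \Delta(d(x,P_2))\bigr)\ln\bigl(1 - \Delta(d(x,P_2))\bigr).
\]
It then remains only to re-express $\Delta$ through $\Delta_1$. Since $f$ is strictly monotone, $f^{-1}$ exists, and for increasing $f$ one has $\Delta(t) = P(\e \ge t) = P(f(\e_1) \ge t) = P(\e_1 \ge f^{-1}(t)) = \Delta_1(f^{-1}(t))$; substituting $t = d(x,P_2)$ into the display gives the claimed identity. As the author's terse ``As above'' in the adjacent pair of theorems indicates, this is genuinely a two-line specialisation of the work already done.

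The one point that needs care — and the only real obstacle — is the orientation in the change of variables $\Delta(t) = \Delta_1(f^{-1}(t))$. This is immediate when $f$ is increasing, but when $f$ is strictly \emph{decreasing} the event $f(\e_1) \ge t$ becomes $\e_1 \le f^{-1}(t)$, so that $\Delta(t) = 1 - \Delta_1(f^{-1}(t))$ whenever $\delta_{\e_1}$ is atomless there, and the final formula would have to be rewritten accordingly. I would therefore either restrict the statement to increasing $f$ or carry the decreasing case through explicitly and record the modified expression; in all other respects the argument is unchanged. A minor secondary check is that the push-forward density $\delta$ is well defined and that $\Delta$ is continuous wherever the concentration theorem requires it, both of which follow from strict monotonicity of $f$ together with the standing regularity assumptions on $\delta_{\e_1}$.
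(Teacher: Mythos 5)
Your proof is correct and follows essentially the same route as the paper's: the paper likewise rewrites $\e_2 \leq f(\e_1)$ as $\e_2 \leq \e$ for an auxiliary label with threshold $\e = f(\e_1)$, invokes the differing-prototypes concentration result, and then changes variables via $\Delta(d(x,P_2)) = \Delta_1(f^{-1}(d(x,P_2)))$. Your caveat about strictly decreasing $f$ is well taken and applies equally to the paper's own argument, whose change of variables tacitly assumes $f$ increasing.
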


\begin{proof}
The proof is entirely similar to that of theorem \ref{thm:f}
\end{proof}

\subsection{Links to other models of hedges}
\label{sub:det}
It is possible to specify the dependence of the threshold of the hedged concept on the threshold of the unhedged concept purely deterministically, i.e. by $\e_2 = f(\e_1)$, rather than  $\e_2 \leq f(\e_1)$. In this case, we can show links to other models of hedges from the literature.

A simple example of a deterministic dependency is given below.

\begin{exa}
 Suppose $\Omega = \mathbb{R}$, $d$ is Euclidean distance and that $L_1$ has prototype $P = 5$ and $\e \sim $ Uniform$[0,3]$. Then as before, 

\[
\mu_L(x)  = 
\begin{cases}
1 - \frac{|x - 5|}{3} & \text{ if } x \in [2, 8]\\
0 & otherwise
\end{cases}•
\]

To implement a dilation hedge, we would form a new label $qL$ with $P_q = P = 5$ and $\e_q = k_q\e$ with $k_q > 1$. For a contraction hedge, we would form the label $vL$ by setting $P_v = P = 5$ and $\e_v = k_v\e$ with $k_v <1$. Then, 

\[
\mu_{hL}(x)  = 
\begin{cases}
1 - \frac{|x - 5|}{3k} & \text{ if } x \in [5 - 3k, 5+ 3k]\\
0 & otherwise
\end{cases}•
\]
where $h = q$ or $v$, $k = k_q$ or $k_v$ respectively.

The effect of implementing these hedges is illustrated in figure \ref{fig:f_plot}.

\begin{figure}[htbp]
\centering
\includegraphics[scale = 0.5]{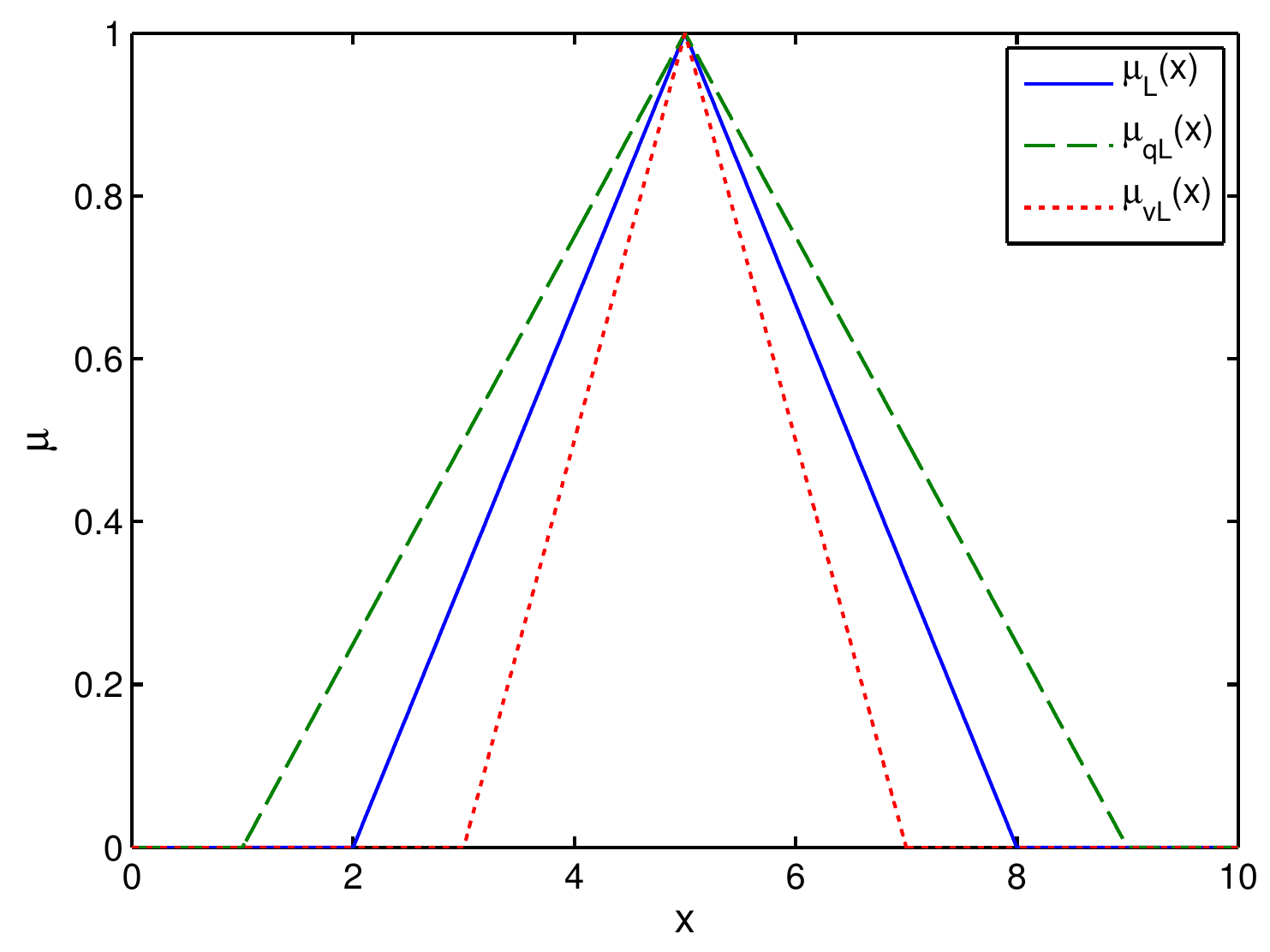}
\caption{Plots of $\mu_L(x)$, $\mu_{qL}(x)$ and $\mu_{vL}(x)$. The prototype, a single point, remains constant on application of the hedges.}
\label{fig:f_plot}
\end{figure}

\end{exa}

Using this approach, we can also create an effect similar to that of changing the prototype.  Suppose that a label $L$ in a conceptual space $\Omega$ has a single point $P$ as a prototype, but that the minimum value of the threshold $\e$ is greater than 0, for example, $\e \sim$ Uniform$[c,a]$. Then 

\[
\mu_{L}(x)  = 
\begin{cases}
1 & \text{ if } d(x,P) < c\\
\frac{a}{a - c} - \frac{|x - P|}{a - c} & \text{ if } d(x, P) \in [a,c]\\
0 & otherwise
\end{cases}•
\]

Suppose that a hedged concept $hL$ is formed from $L$ by the dependency $\e_h = k\e$ where $k$ is a constant. Then 

\[
\mu_{L}(x)  = \Delta(\frac{|x-P|}{k}) = 
\begin{cases}
1 & \text{ if } \frac{|x-P|}{k} < c\\
\frac{a}{(a - c)} - \frac{|x - P|}{k(a - c)} & \text{ if } \frac{|x-P|}{k} \in [a,c]\\
0 & otherwise
\end{cases}• = 
\begin{cases}
1 & \text{ if } |x - P| < kc\\
\frac{ka - |x - P|}{k(a - c)} & \text{ if } |x-P| \in [ka,kc]\\
0 & otherwise
\end{cases}•
\]

This effect is illustrated in the example below.

\begin{exa}
Suppose that the conceptual space $\Omega = \mathbb{R}$ and that a label $L$ has prototype $P = 5$ and threshold $\e \sim $ Uniform$[1, 2]$. Then 

\[
\mu_{L}(x)  = 
\begin{cases}
1 & \text{ if } |x - 5| < 1\\
2 - |x - 5| & \text{ if } |x - 5| \in [1,2]\\
0 & otherwise
\end{cases}•
\]

Forming a new label $qL$ by applying the hedge $\e_q = 2\e$ gives appropriateness measure

\[
\mu_{qL}(x)  = 
\begin{cases}
1 & \text{ if } |x - 5| < 2\\
\frac{4 - |x - 5|}{2} & \text{ if } |x - 5| \in [2,4]\\
0 & otherwise
\end{cases}•
\]

Forming a new label $vL$ by applying the hedge $\e_v = 0.5\e$ gives appropriateness measure 

\[
\mu_{vL}(x)  = 
\begin{cases}
1 & \text{ if } |x - 5| < 0.5\\
\frac{1 - |x - 5|}{0.5} & \text{ if } |x - 5| \in [0.5,1]\\
0 & otherwise
\end{cases}•
\]

These are illustrated in figure \ref{fig:fprot_plot}.

\begin{figure}[htbp]
\centering
\includegraphics[scale = 0.5]{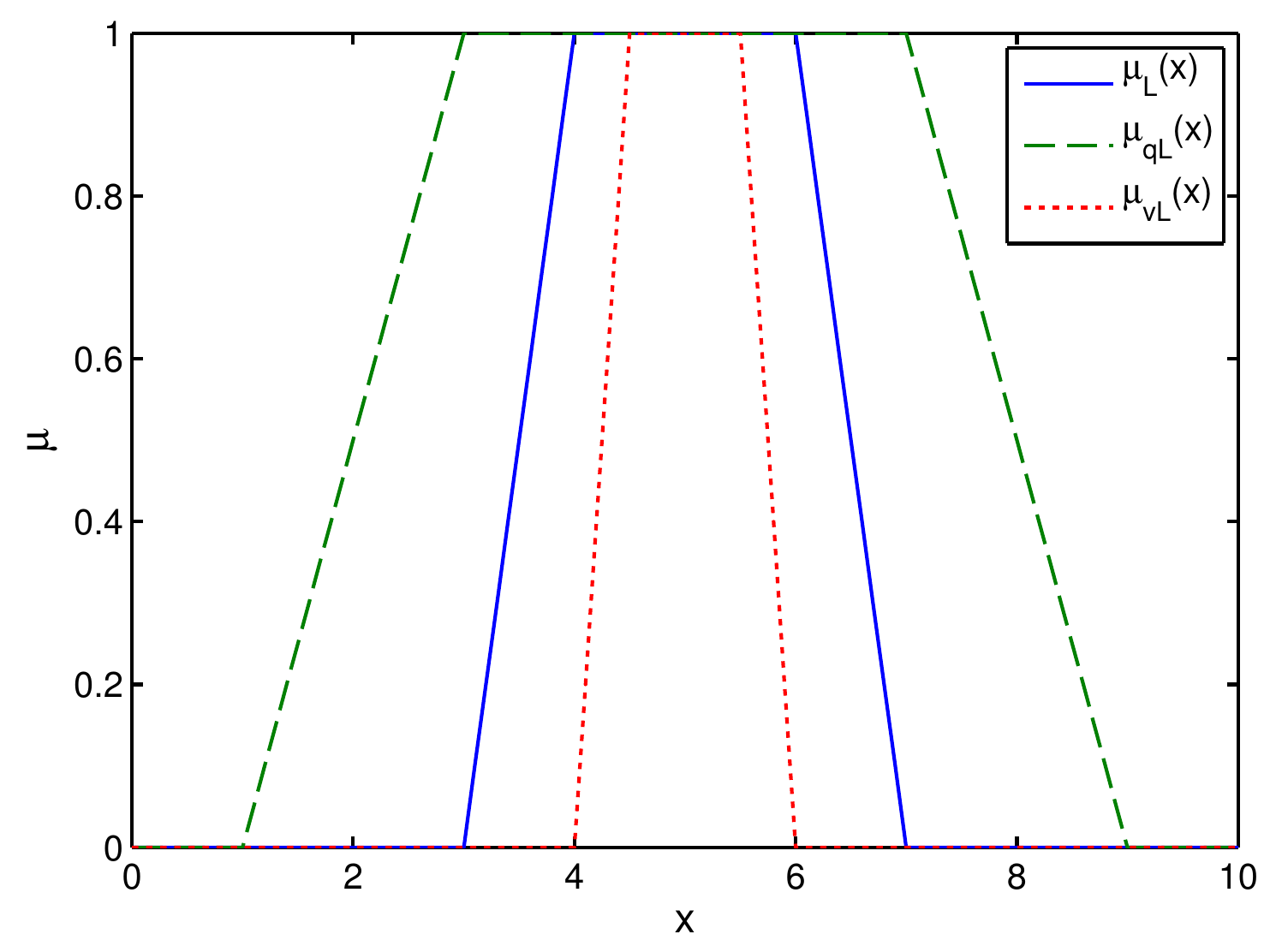}
\caption{Plots of $\mu_L(x)$, $\mu_{qL}(x)$ and $\mu_{vL}(x)$. In this case, the set $\{x \in \Omega: \mu_{hL}(x) = 1\}$, where $h = v, q\text{ or nothing}$, expands or contracts on application of the hedges.}
\label{fig:fprot_plot}
\end{figure}
\end{exa}

Notice that if we set $\Omega = \mathbb{R}^+$ and label $L$ specified by $P = 0$, $\e \sim $ Uniform$[c, a]$, this is identical to the linear membership model given in \cite{novak}. Specifically, we have 

\[
\mu_{L}(x)  = 
\begin{cases}
1 & \text{ if } x < c\\
\frac{a - x}{a-c} & \text{ if } x \in [c,a]\\
0 & \text{otherwise}
\end{cases}
\]

Forming a hedged concept $hL$ by setting $P_{hL}=P=0$ and $\e_{hL} = k\e$ gives

\[
\mu_{hL}(x)  = 
\begin{cases}
1 & \text{ if } x < kc\\
\frac{ka - x}{ka-kc} & \text{ if } x \in [kc,ka]\\
0 & \text{otherwise}
\end{cases}
\]

Comparing this with the model given in section \ref{sec:lh}, we see that this is precisely the model proposed by \cite{novak} in the linear case.

Similarity between the hedging effects illustrated in figure \ref{fig:f_plot} and the effects implemented in the model proposed in \cite{bosc}, illustrated in figure \ref{fig:bosc_plot}, can clearly be seen. To derive the model given in \cite{bosc}, we describe the fuzzy sets associated with labels $L$ and $hL$ in trapezoidal notation. Notice that $L= (P-a, P - c, P+c, P+a)$ and $hL = (P-ka, P- kc, P+kc, P+ka)$. We can render this transformation in the terms employed by \cite{bosc}. Consider labels $L$ and $hL$ as fuzzy sets characterised by the appropriateness measure $\mu_{L}(x)$ and $\mu_{hL}(x)$. Then, in the case of dilation, we have:

\[
qL = E^Z(L)(s) = \text{sup}_{r \in \Omega}T(\mu_{L}(s), E^Z(s,r))
\]

and for contraction,

\[
vL = E_Z(L)(s) = \text{inf}_{r \in \Omega}I(\mu_{L}(s), E^Z(s,r))
\]

When $T$ is the $T$-norm $min$, I is the G\"{o}del implication and  $Z = (-z-\alpha, -z, z, z+\alpha)$ (with the restriction that $c < z$ to ensure a well-defined set), the approach in \cite{bosc} gives $qL = (P-a - z - \alpha, P - c - z, P+c + z, P+a + z + \alpha)$. If we set $z = (k - 1)c$ and $\alpha = (k - 1)(a - c)$, this is equal to  $qL = (P-ka, P- kc, P+kc, P+ka)$.

However, we also require that $vL = (P-ka, P- kc, P+kc, P+ka)$.  The approach in \cite{bosc} gives $vL = (P-a + z + \alpha, P - c + z, P+c - z, P+a - z - \alpha)$, and we therefore need to set  $z = (1-k)c$ and $\alpha = (1-k)(a-c)$

This formulation is not as general as given in \cite{bosc}, however, note that it only uses one additional parameter and no additional operators, rather than the two parameters and either a $T$-norm or implication used by \cite{bosc}.

Two more key models from the literature are the powering and shifting modifiers proposed in \cite{zadehhedges}. Recall that powering modifiers are of the form $\mu_{hL}(x) = (\mu_L(x))^k$ and shifting modifiers are of the form $\mu_{hL}(x) = (\mu_L(x-a))$. Shifting modifiers are easy to implement within our model, simply by shifting the prototype by the quantity $a$.

Powering modifiers can be expressed as a function of the threshold $\e$ given a particular distribution of the threshold $\delta$. Suppose $\Omega = \mathbb{R}$, $\e \sim U[0,c]$, giving

\[
\mu_L(x)  = 
\begin{cases}
1 - \frac{d(x, P)}{b} & \text{ if } x \in [P-b, P+b]\\
0 & otherwise
\end{cases}•
\]

and suppose a new label $hL$ is formed with prototype $P$ and threshold $\e_h = f(\e)$ such that $\mu_{hL}(x) = \mu_L(x)^k$. Then $\mu_{hL}(x) = \Delta(f^{-1}(d(x,P))) = (\Delta(d(x,P)))^k$, so

\[
f^{-1}(d(x,P)) = \Delta^{-1}((\Delta(d(x,P)))^k) = b - b(\frac{(b - d(x, P))^k}{b^k}) = b - \frac{(b - d(x,P))^k}{b^{k-1}}
\]

and hence 
\[
\e_{hL} = f(\e) = b - (b^{k-1}(b - \e))^{1/k}
\]

This expression seems surprisingly complicated, and there may be better ways of deriving the powering hedges that are not as a function of the threshold $\e$.

In this section we have shown that our general model can capture some of the many approaches found in the literature as special cases.  We now go on to look at the property of compositionality that is exhibited by a number of models.

\subsection{Compositionality}
\label{sub:comp}
One of the features of hedges seen in \cite{bosc, decock2002, novak, zadehhedges} is that they can be applied multiple times. Within the label semantics framework, this consists in expanding or reducing the threshold of a concept a number of times. The directed acyclic graph corresponding to this is shown  in figure \ref{fig:compn}. We show below that expressions for `very' and `quite' as given in theorems \ref{thm:dil} and \ref{thm:con} are compositional, and that the appropriateness of a concept after $n$ applications of a hedge can be expressed purely in terms of the appropriateness after $n -1$ applications. We also derive expressions for the composition of deterministic hedges as described in section \ref{sub:det}.

\begin{figure}
  \centering
  \includegraphics[scale = 1]{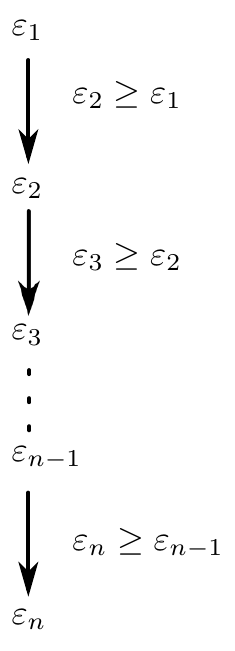}
	  \caption{Threshold dependencies on multiple applications of the hedge `quite'. Each threshold $\e_i$ is directly dependent on the one preceding it, $\e_{i-1}$.}
	\label{fig:compn}
\end{figure}

\begin{thm}
\label{thm:comp}
Suppose that labels $L_1, L_2, ... , L_n$ are defined by prototypes $P_1 = P_2 = ... = P_n = P$, thresholds $\e_1 \geq \e_2 \geq ... \geq \e_n$ and with a distance metric $d$ common to all labels. Then $\mu_{L_n}(x) = \mu_{L_{n-1}}(x)  +  (1-\mu_{L_{n-1}}(x))\ln(1 - \mu_{L_{n-1}}(x))$
\end{thm}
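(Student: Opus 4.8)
The plan is to reduce the statement to a single application of Theorem~\ref{thm:con} by exploiting the chain structure of the graph in Figure~\ref{fig:compn}. Write $u = d(x,P)$, let $\delta_{n-1}^{\mathrm m}$ denote the marginal distribution of $\e_{n-1}$ in the joint model, and set $\Delta_{n-1}(t) = \int_t^\infty \delta_{n-1}^{\mathrm m}(\e)\,\mathrm d\e$, so that $\mu_{L_{n-1}}(x) = P(\e_{n-1}\geq u) = \Delta_{n-1}(u)$. First I would record the DAG factorisation $\delta(\e_1,\dots,\e_n) = \delta(\e_1,\dots,\e_{n-1})\,\delta_{\e_n\mid\e_{n-1}}(\e_n\mid\e_{n-1})$, and note that, since $L_n$ is a concentration of $L_{n-1}$ in the precise sense required by Theorem~\ref{thm:con} (namely $P_n = P_{n-1} = P$, $\e_n\leq\e_{n-1}$, and the distribution of $\e_n$ before conditioning on $\e_n\leq\e_{n-1}$ equal to the marginal of $\e_{n-1}$), the conditional factor must be $\delta_{\e_n\mid\e_{n-1}}(\e_n\mid\e_{n-1}) = \delta_{n-1}^{\mathrm m}(\e_n)/\bigl(1-\Delta_{n-1}(\e_{n-1})\bigr)$ on $0\leq\e_n\leq\e_{n-1}$, and $0$ otherwise.

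Next I would integrate out the upstream thresholds $\e_1,\dots,\e_{n-2}$: integrating $\delta(\e_1,\dots,\e_{n-1})$ over the region $\e_1\geq\dots\geq\e_{n-2}\geq\e_{n-1}$ returns, by definition of marginalisation, the density $\delta_{n-1}^{\mathrm m}(\e_{n-1})$. Hence
\[
\mu_{L_n}(x) = \int_0^\infty\!\!\int_{\min(\e_{n-1},\,u)}^{\e_{n-1}} \frac{\delta_{n-1}^{\mathrm m}(\e_{n-1})\,\delta_{n-1}^{\mathrm m}(\e_n)}{1-\Delta_{n-1}(\e_{n-1})}\,\mathrm d\e_n\,\mathrm d\e_{n-1},
\]
which is exactly the double integral appearing in the proof of Theorem~\ref{thm:con} with $\delta_{\e_1}$ replaced by $\delta_{n-1}^{\mathrm m}$ and $\Delta_1$ by $\Delta_{n-1}$. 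That integral was there shown to equal $\Delta_{n-1}(u) + (1-\Delta_{n-1}(u))\ln(1-\Delta_{n-1}(u))$, and since $\Delta_{n-1}(u) = \mu_{L_{n-1}}(x)$ this is precisely the asserted identity. Equivalently, one may simply observe that the pair $(\e_{n-1},\e_n)$ has exactly the joint law assumed in Theorem~\ref{thm:con} with $L_{n-1}$ in the role of the base label, and quote that theorem; an induction on $n$ with base case $n=2$ (which is Theorem~\ref{thm:con} itself) is also possible but unnecessary once the chain's Markov property is used.

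The step to be careful about is the bookkeeping over which distribution plays the role of the ``base'' threshold distribution when Theorem~\ref{thm:con} is invoked: it is the marginal of $\e_{n-1}$, not the original $\delta_{\e_1}$, so that $\mu_{L_{n-1}}(x) = \Delta_{n-1}(d(x,P))$ rather than $\Delta_1(d(x,P))$. It is exactly this identification --- underwritten by the conditional-independence structure of the DAG, which makes $\e_n$ independent of $\e_1,\dots,\e_{n-2}$ given $\e_{n-1}$ --- that lets the recursion close on $\mu_{L_{n-1}}$ rather than leaving an expression that still remembers the whole history. As a by-product this shows that $n$ successive ``very''-hedges act as the $(n-1)$-fold iterate of the single-step concentration map $m\mapsto m + (1-m)\ln(1-m)$.
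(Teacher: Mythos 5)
Your proof is correct, and it takes a more modular route than the paper's. The paper argues by induction on $n$: it writes out the full $(k{+}1)$-fold integral of the factorised joint density, splits the innermost integral at $d(x,P)$, recognises $\mu_{L_k}(x)$ in the two resulting pieces, and then collapses the remaining nested integral using the identity $\delta_{\e_i}(\e_i) = -\delta_{\e_{i-1}}(\e_i)\ln(\Delta_{i-1}(\e_i))$, obtained by differentiating the inductive hypothesis. Your argument isolates exactly the fact that makes that induction work --- that, by the Markov structure of the chain, the pair $(\e_{n-1},\e_n)$ has precisely the joint law assumed in Theorem \ref{thm:con}, with the marginal of $\e_{n-1}$ playing the role of the base threshold distribution and $\mu_{L_{n-1}}(x) = \Delta_{n-1}(d(x,P))$ --- and then simply quotes that theorem; no induction and no manipulation of the $n$-fold integral is needed. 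What each approach buys: the paper's computation verifies the recursion from the raw factorisation, while yours makes transparent that $n$ applications of the hedge are the $(n{-}1)$-fold iterate of the one-step map, at the cost of having to state the modelling assumption explicitly. You do state it: the conditional of $\e_n$ given $\e_{n-1}$ must be the truncation of the \emph{marginal} of $\e_{n-1}$, i.e. each $L_{i+1}$ is a concentration of $L_i$ in the precise sense of Theorem \ref{thm:con}; the bare condition ``$\e_1 \geq \dots \geq \e_n$'' in the statement does not by itself fix the joint law, but this truncated-marginal factorisation is exactly the one the paper uses in its own proof, so your reading matches the intended model. Incidentally, the paper's printed proof is written for the dilation chain (it cites Theorem \ref{thm:dil} as the base case and ends with $\mu_{L_k} - \mu_{L_k}\ln(\mu_{L_k})$) even though the statement concerns decreasing thresholds; you prove the statement as actually stated, the concentration version, and your argument transfers verbatim to the dilation case.
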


\begin{proof}

We proceed by induction on $n$. Theorem \ref{thm:dil} proves this for $n=2$. Assuming true for $n=k$, we have

\begin{align*}
	\mu_{L_{k+1}}(x) &= \int_0^\infty \int_0^\infty ... \int_{max(d(x,P), \e_k)}^\infty \delta(\e_1, \e_2,...,\e_{k+1}) \mathrm{d}\e_{k+1}...\mathrm{d}\e_1 \\
	&= \int_0^\infty \frac{\delta_{\e_1}(\e_1)}{\Delta_1(\e_1)} \int_0^\infty \frac{\delta_{\e_1}(\e_2)}{\Delta_2(\e_2)}  ... \int_0^\infty \frac{\delta_{\e_{k-1}}(\e_k)}{\Delta_k(\e_k)} \int_{max(d(x,P), \e_{k})}^\infty \delta_{\e_k}(\e_{k+1}) \mathrm{d}\e_{k+1}...\mathrm{d}\e_1 \\
	&=\int_0^\infty \frac{\delta_{\e_1}(\e_1)}{\Delta_1(\e_1)} \int_0^\infty \frac{\delta_{\e_1}(\e_2)}{\Delta_2(\e_2)}  ... \int_{max(d(x, P), \e_{k-1})}^\infty \frac{\delta_{\e_{k-1}}(\e_k)}{\Delta_k(\e_k)} \overbrace{\int_{\e_k}^\infty \delta_{\e_k}(\e_{k+1}) \mathrm{d}\e_{k+1}}^{= \Delta_k(\e_k)} \mathrm{d}\e_k...\mathrm{d}\e_1 \\ 
	& \quad +\int_0^{d(x,P)} \frac{\delta_{\e_1}(\e_1)}{\Delta_1(\e_1)} \int_{\e_1}^{d(x,P)} \frac{\delta_{\e_1}(\e_2)}{\Delta_2(\e_2)}  ... \int_{\e_{k-1}}^{d(x,P)} \frac{\delta_{\e_{k-1}}(\e_k)}{\Delta_k(\e_k)} \overbrace{\int_{d(x,P)}^\infty \delta_{\e_k}(\e_{k+1}) \mathrm{d}\e_{k+1}}^{= \mu_{L_k}(x)} \mathrm{d}\e_k...\mathrm{d}\e_1 \\
	&= \overbrace{\int_0^\infty \frac{\delta_{\e_1}(\e_1)}{\Delta_1(\e_1)} \int_0^\infty \frac{\delta_{\e_1}(\e_2)}{\Delta_2(\e_2)}  ... \int_{max(d(x, P), \e_k)}^\infty \delta_{\e_{k-1}}(\e_k)\mathrm{d}\e_k...\mathrm{d}\e_1 }^{= \mu_{L_k}(x) \text{ by ind. hyp.}}\\
	& \quad + \mu_{L_k}(x) \int_0^{d(x,P)} \frac{\delta_{\e_1}(\e_1)}{\Delta_1(\e_1)} \int_{\e_1}^{d(x,P)} \frac{\delta_{\e_1}(\e_2)}{\Delta_2(\e_2)}  ... \int_{\e_{k-1}}^{d(x,P)} \frac{\delta_{\e_{k-1}}(\e_k)}{\Delta_k(\e_k)} \mathrm{d}\e_k...\mathrm{d}\e_1\\
	&= \mu_{L_k}(x) + \mu_{L_k}(x) \int_0^{d(x,P)} \frac{\delta_{\e_{k-1}}(\e_k)}{\Delta_k(\e_k)} ... \int_0^{\e_3} \frac{\delta_{\e_1}(\e_2)}{\Delta_2(\e_2)} \int_0^{\e_2} \frac{\delta_{\e_1}(\e_1)}{\Delta_1(\e_1)} \mathrm{d}\e_1...\mathrm{d}\e_k\\
	&= \mu_{L_k}(x) + \mu_{L_k}(x) \overbrace{\int_0^{d(x,P)} \frac{\delta_{\e_{k-1}}(\e_k)}{\Delta_k(\e_k)} ... \int_0^{\e_3} \frac{-\delta_{\e_1}(\e_2)\ln(\Delta_1(\e_2))}{\Delta_2(\e_2)} \mathrm{d}\e_2...\mathrm{d}\e_k}^{=A}
\end{align*}

By the inductive hypothesis, $\forall i = 0...k$ 
\begin{align*}
	\delta_{\e_i}(\e_i) &= -\frac{\mathrm{d}}{\mathrm{d\e_i}}\Delta_i(\e_i)\\ \nonumber
	&= -\frac{\mathrm{d}}{\mathrm{d\e_i}}(\Delta_{i-1}(\e_i) - \Delta_{i-1}(\e_i)\ln(\Delta_{i-1}(\e_i)))\\
	&= -\delta_{\e_{i-1}}(\e_i)\ln(\Delta_{i-1}(\e_i))
\end{align*}

Recursively substituting in $A$, we obtain

\begin{align*}
	\mu_{L_{k+1}}(x) &= \mu_{L_k}(x) + \mu_{L_k}(x) \int_0^{d(x,P)} \frac{\delta_{\e_{k-1}}(\e_k)}{\Delta_k(\e_k)} ... \int_0^{\e_3} \frac{\delta_{\e_2}(\e_2)}{\Delta_2(\e_2)} \mathrm{d}\e_2...\mathrm{d}\e_k\\
	&= \mu_{L_k}(x) +\mu_{L_k}(x) \int_0^{d(x, P)}\frac{\delta_{\e_k}(\e_k)}{\Delta_k(\e_k)} \mathrm{d}\e_k\\
 	&= \mu_{L_k}(x) -\mu_{L_k}(x)\ln(\mu_{L_k}(x))
\end{align*}
\end{proof}

\begin{thm}
Suppose labels $L_1, L_2, ..., L_n$ are defined by prototypes $P_1 = P_2 = ... = P_n = P$, thresholds $\e_1 \leq \e_2 \leq ... \leq \e_n$, and that distance metric $d$ is common to all. Then $\mu_{L_n}(x) = \mu_{L_{n-1}}(x)  -  \mu_{L_{n-1}}(x)\ln(\mu_{L_{n-1}}(x))$
\end{thm}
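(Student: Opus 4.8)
The plan is to prove this by induction on $n$, in parallel with Theorem~\ref{thm:comp} but using the dilation conditional of Theorem~\ref{thm:dil} in place of the concentration one. The base case $n=2$ is exactly Theorem~\ref{thm:dil}. For the inductive step I would first record the joint threshold density coming from the chain network of figure~\ref{fig:compn}: writing $\Delta_i$ for the survival function of the marginal of $\e_i$, so that $\mu_{L_i}(x)=\Delta_i(d(x,P))$, each edge $\e_{i-1}\to\e_i$ contributes the factor $\delta_{\e_{i-1}}(\e_i)/\Delta_{i-1}(\e_{i-1})$ on $\{\e_i\ge\e_{i-1}\}$, giving $\delta(\e_1,\dots,\e_n)=\delta_{\e_1}(\e_1)\prod_{i=2}^{n}\delta_{\e_{i-1}}(\e_i)/\Delta_{i-1}(\e_{i-1})$ on the ordered region $\e_1\le\cdots\le\e_n$. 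I would also note that applying the length-$k$ statement to every prefix $L_1,\dots,L_j$ gives $\Delta_j=\Delta_{j-1}-\Delta_{j-1}\ln\Delta_{j-1}$ for all $j\le k$, hence, differentiating, the recursion $\delta_{\e_j}(\e)=-\delta_{\e_{j-1}}(\e)\ln\Delta_{j-1}(\e)$ — the same identity used in the proof of Theorem~\ref{thm:comp}.

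The cleanest route is then to exploit the Markov structure of the chain: $\e_{k+1}$ is conditionally independent of $\e_1,\dots,\e_{k-1}$ given $\e_k$, so marginalising $\e_1,\dots,\e_{k-1}$ out of $\mu_{L_{k+1}}(x)$ simply reproduces the marginal density $\delta_{\e_k}$ and reduces the $(k{+}1)$-fold integral to $\int_0^\infty\delta_{\e_k}(\e_k)\int_{\max(d(x,P),\e_k)}^{\infty}\big(\delta_{\e_k}(\e_{k+1})/\Delta_k(\e_k)\big)\,d\e_{k+1}\,d\e_k$. This is precisely the integral evaluated in the proof of Theorem~\ref{thm:dil} with $(\e_1,\delta_{\e_1},\Delta_1)$ replaced by $(\e_k,\delta_{\e_k},\Delta_k)$ — the only facts needed are $\Delta_k(0)=1$ and $\Delta_k'=-\delta_{\e_k}$, both immediate — so it equals $\mu_{L_k}(x)-\mu_{L_k}(x)\ln\mu_{L_k}(x)$, closing the induction.

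Alternatively, to keep the argument self-contained in the style of Theorem~\ref{thm:comp}, I would instead carry the full $(k{+}1)$-fold integral and split the innermost integral $\int_{\max(d(x,P),\e_k)}^{\infty}\delta_{\e_k}(\e_{k+1})\,d\e_{k+1}$ into the case $\e_k\ge d(x,P)$ (where it equals $\Delta_k(\e_k)$, the leftover integral over $\e_1,\dots,\e_k$ then being the marginal probability $\Delta_k(d(x,P))=\mu_{L_k}(x)$) and the case $\e_k\le d(x,P)$ (where the constant $\mu_{L_k}(x)=\Delta_k(d(x,P))$ factors out, leaving an iterated integral over the simplex $0\le\e_1\le\cdots\le\e_k\le d(x,P)$). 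That iterated integral collapses by repeated use of $\int_0^{t}\delta_{\e_j}(\e)/\Delta_j(\e)\,d\e=-\ln\Delta_j(t)$ together with $-\ln\Delta_{j}(\e)\,\delta_{\e_{j}}(\e)=\delta_{\e_{j+1}}(\e)$, telescoping down to $\int_0^{d(x,P)}\delta_{\e_k}(\e_k)/\Delta_k(\e_k)\,d\e_k=-\ln\mu_{L_k}(x)$; adding the two cases again gives $\mu_{L_k}(x)-\mu_{L_k}(x)\ln\mu_{L_k}(x)$.

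I expect the only real obstacle, should one take the second route, to be purely bookkeeping: tracking the nested limits of integration over the ordered simplex after the innermost split, and checking that the recursion $\delta_{\e_{j+1}}(\e)=-\delta_{\e_j}(\e)\ln\Delta_j(\e)$ is invoked at the correct index at each stage so that the chain of antiderivatives genuinely telescopes. The conditional-independence reduction in the second paragraph sidesteps this entirely, which is why I would present it as the main line of argument and relegate the explicit induction to a remark.
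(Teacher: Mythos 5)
Your proposal is correct. Your ``self-contained'' fallback (the second route) is essentially the paper's own argument: the paper proves this theorem by pointing to the proof of Theorem \ref{thm:comp}, and that displayed proof is exactly the full $(k{+}1)$-fold integral over the ordered region, split according to whether $\e_k \le d(x,P)$ or $\e_k \ge d(x,P)$, with the collapse of the simplex integral driven by the same identity $\delta_{\e_i}(\e) = -\delta_{\e_{i-1}}(\e)\ln\Delta_{i-1}(\e)$ that you derive by differentiating the inductive formula. Your primary route is a genuine (if modest) streamlining rather than a different theorem: integrating out $\e_1,\dots,\e_{k-1}$ first and invoking the Markov structure of the chain reduces the inductive step to the two-variable computation of Theorem \ref{thm:dil} applied to the pair $(L_k,L_{k+1})$, with $(\delta_{\e_1},\Delta_1)$ replaced by the marginal $(\delta_{\e_k},\Delta_k)$; the only inputs are $\Delta_k(0)=1$, $\Delta_k'=-\delta_{\e_k}$, and the modelling assumption---the same one encoded in the paper's factorisation, where each edge carries the numerator $\delta_{\e_{i-1}}(\e_i)$---that the unconditional distribution of $\e_{k+1}$ is the marginal of $\e_k$ truncated to $[\e_k,\infty)$. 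What your shortcut buys is the elimination of the nested-limit bookkeeping (and it makes the explicit form of the inductive hypothesis nearly superfluous, since each step is just Theorem \ref{thm:dil} applied to the current marginal); what the paper's longer computation buys is a verification carried out directly on the joint density of the whole chain, without appealing to conditional independence as a separate principle. No gaps: both of your routes go through.
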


\begin{proof}
Similar to proof of theorem \ref{thm:comp}.
\end{proof}

We can also derive expressions for the composition of deterministic hedges. 

\begin{thm}
 Suppose labels $L_1, L_2, ..., L_n$ are defined by prototypes $P_1 = P_2 = ... = P_n = P$, thresholds $\e_n = f(\e_{n-1}), \e_{n-1} = f(\e_{n-2}), ..., \e_2 = f(\e_1)$, where $f$ is monotone increasing or decreasing, and that distance metric $d$ is common to all. Then $\mu_{L_n}(x) = \Delta_1(f^{-(n-1)}(d(x,P))$, where $f^{-k}$ signifies $f^{-1}$ composed $k$ times.
\end{thm}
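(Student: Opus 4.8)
The plan is to argue by induction on $n$, following the same skeleton as the proof of Theorem~\ref{thm:comp}; here the inductive step is much lighter, because the dependence $\e_i = f(\e_{i-1})$ is deterministic rather than an order constraint, so no dilation or concentration transform appears and the computation reduces to a single change of variable in the threshold. The base case $n=1$ is immediate: $\mu_{L_1}(x) = \Delta_1(d(x,P))$ and $f^{-(1-1)} = f^{-0}$ is the identity.

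For the inductive step, suppose $\mu_{L_k}(x) = \Delta_1(f^{-(k-1)}(d(x,P)))$ for all $x \in \Omega$. Since $P_{k+1} = P$ and $\e_{k+1} = f(\e_k)$, I would write $\mu_{L_{k+1}}(x) = P(d(x,P) \le \e_{k+1}) = P(d(x,P) \le f(\e_k))$ and then, exactly as in the inequality-reversal step in the proof of Theorem~\ref{thm:f}, use strict monotonicity of $f$ to pass to $P(f^{-1}(d(x,P)) \le \e_k) = \Delta_k(f^{-1}(d(x,P)))$. To close the induction I need the \emph{functional} identity $\Delta_m = \Delta_1 \circ f^{-(m-1)}$ on all of $\mathbb{R}^+$, not merely on the range of $x \mapsto d(x,P)$; the clean way to get this is to unroll the recursion, $\e_m = f^{m-1}(\e_1)$ (ordinary $(m-1)$-fold composition of $f$ applied to the random variable $\e_1$), whence $\Delta_m(t) = P(f^{m-1}(\e_1) \ge t) = P(\e_1 \ge f^{-(m-1)}(t)) = \Delta_1(f^{-(m-1)}(t))$ for every $t$. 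Applying this with $m = k$ and substituting above gives $\mu_{L_{k+1}}(x) = \Delta_1(f^{-(k-1)}(f^{-1}(d(x,P)))) = \Delta_1(f^{-k}(d(x,P)))$, completing the induction. In fact the same unrolling yields a one-line non-inductive proof: $\mu_{L_n}(x) = P(d(x,P) \le \e_n) = P(d(x,P) \le f^{n-1}(\e_1)) = \Delta_1(f^{-(n-1)}(d(x,P)))$.

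The one place I would tread carefully is the monotone \emph{decreasing} case. If $f$ is decreasing then $f^{-1}$ reverses inequalities, so $d(x,P) \le f(\e)$ becomes $\e \le f^{-1}(d(x,P))$, turning the survival function $\Delta_1$ into its complement, and moreover $f^{n-1}$ is increasing or decreasing according to the parity of $n-1$. Matching the stated closed form verbatim therefore really needs $f$ strictly increasing --- the natural case for a hedge, and the one implicitly used in the proof of Theorem~\ref{thm:f}; for decreasing $f$ one should either insert the appropriate $1-(\cdot)$ corrections or restrict the statement. Accordingly I would state and prove the theorem for strictly increasing $f$ and append a remark on the decreasing case, rather than assert the identical formula for both.
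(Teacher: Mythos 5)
Your proof is correct and follows essentially the same route as the paper's: the paper also argues by induction, taking $\mu_{L_2}(x)=\Delta_1(f^{-1}(d(x,P)))$ as the base case and writing $\mu_{L_{k+1}}(x)=\Delta_k(f^{-1}(d(x,P)))=\Delta_1(f^{-k}(d(x,P)))$ for the inductive step. Where you add value is in justifying the passage from $\Delta_k$ to $\Delta_1\circ f^{-(k-1)}$ as an identity of functions on all of $\mathbb{R}^+$: the paper's inductive hypothesis, read literally, gives it only at points of the form $d(x,P)$, whereas it is needed at $f^{-1}(d(x,P))$; your unrolling $\e_k=f^{k-1}(\e_1)$ supplies the identity everywhere and, as you note, collapses the whole argument to a one-line direct proof. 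Your caveat about decreasing $f$ is also well taken: the equivalence $d(x,P)\le f(\e_k)\iff f^{-1}(d(x,P))\le\e_k$, used both here and in the proof of Theorem \ref{thm:f}, requires $f$ increasing, and for decreasing $f$ the inequality reverses, yielding $1-\Delta_1(f^{-1}(d(x,P)))$ rather than the stated closed form; so the formula as printed is literally valid only in the increasing case, which is the natural one for hedges of the form $\e_h=k\e$. The paper does not make this restriction explicit, so your suggestion to state the result for strictly increasing $f$ and remark separately on the decreasing case is a genuine, if minor, improvement.
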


\begin{proof}
 $\mu_{L_2}(x) = \Delta_1(f^{-1}(d(x,P))$. Suppose that $\mu_{L_k}(x) = \Delta_k(d(x,P)) = \Delta_1(f^{-(k-1)}(d(x,P))$. Since $\e_{k+1} = f(\e_k)$, we have $\mu_{L_{k+1}}(x) = \Delta_k(f^{-1}(d(x,P)) = \Delta_1(f^{-k}(d(x,P)))$. 

Therefore $\mu_{L_n}(x) = \Delta_1(f^{-(n-1)}(d(x,P))$ by induction.
\end{proof}

Since labels can be composed in this way, we can model different degrees of emphasis corresponding to the composition of multiple hedges. So, for example, we could model `extremely L' as `very, very L'. This is illustrated in example \ref{exa:extremely}.

\begin{exa}
\label{exa:extremely}
Suppose the label $L$ is as described in example \ref{exa:dil_plot}, i.e.  $L$ has prototype $P = 5$,  and threshold $\e \sim $ Uniform$[0,3]$. We can form a new label $vL$ with prototype $P_v = P = 5$ and threshold $\e_v \leq \e$, which has appropriateness $\mu_{vL}(x) = \mu_L(x) + (1 - \mu_L(x))\ln(1- \mu_L(x))$ as shown in theorem \ref{thm:con}. We may then form another new label $vvL$ with prototype $P_{vv} = P_v = 5$ and threshold $\e_{vv} \leq \e_v$ with appropriateness $\mu_{vvL}(x) = \mu_{vL}(x) + (1 - \mu_{vL}(x))\ln(1- \mu_{vL}(x))$ as described in theorem \ref{thm:comp}. The effect of applying this contraction hedge is seen in figure \ref{fig:concon_plot}. We have contrasted the effect of the composed hedges with $(\mu_L(x))^4$.

\begin{figure}[htbp]
\centering
\includegraphics[scale = 0.5]{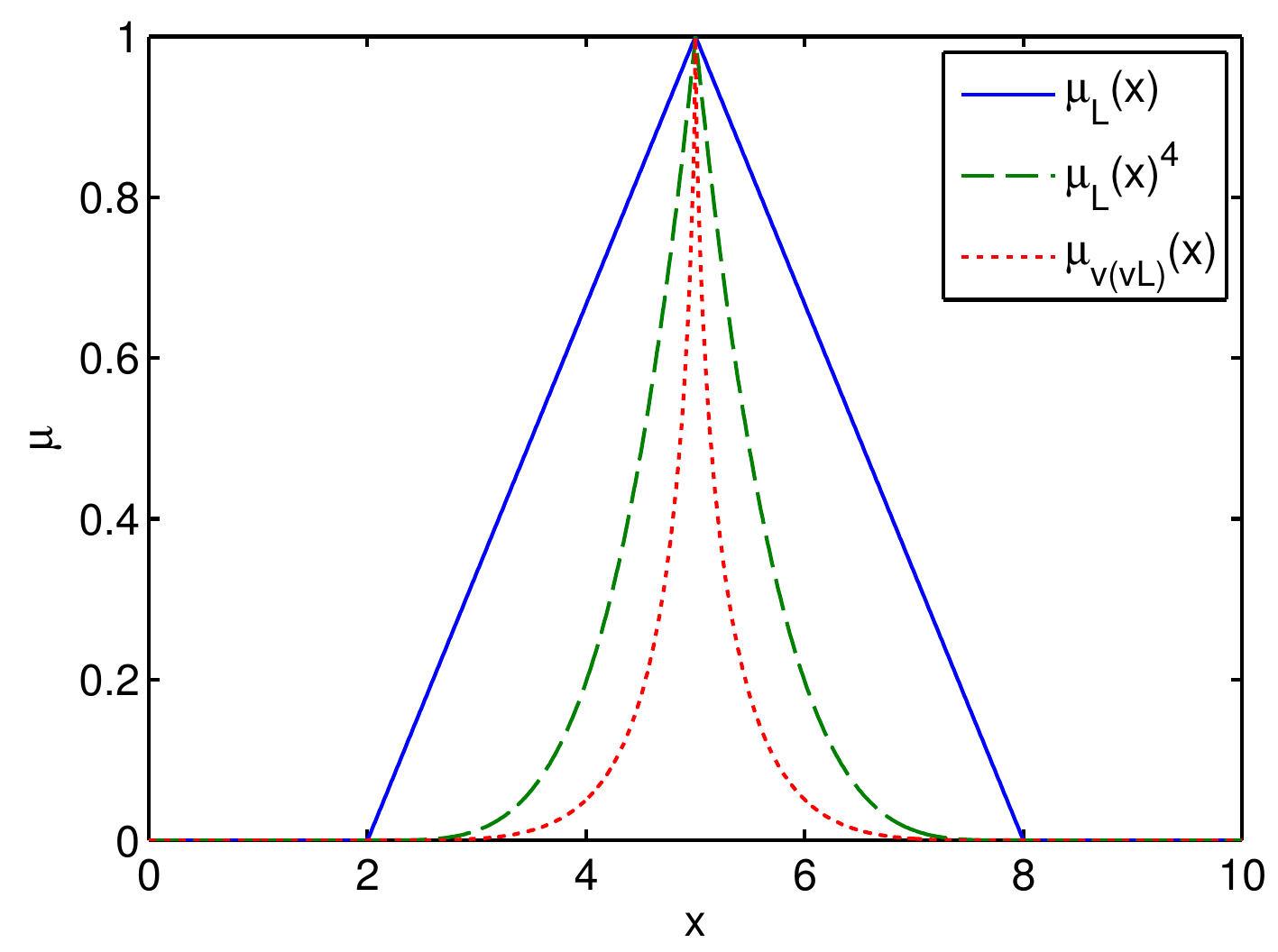}
\caption{Plots of $\mu_L(x)$, $\mu_L(x)^4$, and $\mu_{vvL}(x)$. Values of $\mu_{vvL}(x)$ decrease more quickly as we move from the prototype $P = 5$ than do equivalent values of $\mu_L(x)^4$.}
\label{fig:concon_plot}
\end{figure}
\end{exa}

Since these hedges can be composed only an integral number of times, we cannot obtain the differences in grade that could be achieved with using various powers in a powering modifier, e.g. $\mu_L(x)^{1.73}$. However, in section \ref{sub:f} we discuss how to tune the intensity of hedges by using dependencies on functions of thresholds.  We have further shown in section \ref{sub:det} how to derive powering and shifting modifiers within our framework. It would be interesting to explore how other examples of hedges can be expressed in this framework.

We have shown that when multiple hedges of the forms seen in theorems \ref{thm:dil} and \ref{thm:con} are used, $\mu_{L_n}(x)$ can be expressed purely in terms of the appropriateness of the label directly preceding it. We have not been able to find a closed form solution for this recurrence, however, we can  investigate the fixed points of the recurrence and examine what happens to the values of $\mu_{L_n}(x)$ as $n \rightarrow \infty$. We have also shown that deterministic hedges can be composed, and we go on to look at their behaviour in the limit of composition.

\subsection{Limits of Compositions.}
\label{sub:limits}
The following results examine the behaviour of  $\mu_{L_n}(x)$ as $n \rightarrow \infty$ 
\begin{thm}
\label{thm:dilcom}
Suppose $L_1, ..., L_n$ are labels obtained by repeated application of the dilation operator. Then  $\mu_{L_n}$  has a limit $M^+$ and $M^+ = 1$ $\forall x \in \Omega$ such that $\mu_{L_1}(x) \neq 0$, and $M^+ = 0$ otherwise.
\end{thm}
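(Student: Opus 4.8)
The plan is to reduce the whole statement, for a fixed $x\in\Omega$, to the iteration of a single real function. By Theorem~\ref{thm:dil} and the composition result for repeated dilations proved above (the case of thresholds $\e_1\le\e_2\le\cdots\le\e_n$), we have $\mu_{L_n}(x)=g\bigl(\mu_{L_{n-1}}(x)\bigr)$ for all $n\ge 2$, where $g:[0,1]\to[0,1]$ is given by $g(t)=t-t\ln t$ for $t\in(0,1]$ and $g(0):=0$; this extension is continuous since $t\ln t\to 0$ as $t\to 0^{+}$. Writing $a_n:=\mu_{L_n}(x)$ and $t_0:=\mu_{L_1}(x)\in[0,1]$, it therefore suffices to analyse the orbit $a_1=t_0$, $a_n=g(a_{n-1})$ inside $[0,1]$, and to show it converges to $1$ when $t_0\neq 0$ and to $0$ when $t_0=0$.

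First I would record the elementary properties of $g$ on $[0,1]$: (i) $g$ is continuous with $g(0)=0$ and $g(1)=1$; (ii) $g'(t)=-\ln t>0$ on $(0,1)$, so $g$ is strictly increasing on $[0,1]$, whence $g$ maps $[0,1]$ into itself and $0<g(t)<1$ for $t\in(0,1)$; (iii) $g(t)=t(1-\ln t)>t$ for every $t\in(0,1)$, because $1-\ln t>1$ there. These are one-line calculus facts and form the backbone of the argument.

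Then I would split on the value of $t_0=\mu_{L_1}(x)$. If $t_0=0$, property (i) gives $a_n=0$ for all $n$, so $\mu_{L_n}(x)\to M^{+}=0$. If $t_0=1$, then $a_n=1$ for all $n$, so $M^{+}=1$. If $t_0\in(0,1)$, property (iii) gives $a_n<a_{n+1}$ and property (ii) gives $a_n<1$ for all $n$, so $(a_n)$ is increasing and bounded above and hence converges to some $M\in(t_0,1]$; by continuity of $g$ the limit is a fixed point, $M=g(M)$, i.e. $M\ln M=0$, whose only solution in $(0,1]$ is $M=1$, and since $M\ge t_0>0$ this forces $M^{+}=1$. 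Collecting the three cases yields exactly $M^{+}=1$ when $\mu_{L_1}(x)\neq 0$ and $M^{+}=0$ otherwise.

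There is no deep obstacle here; the two points that need care are the continuous extension of $g$ at $0$ together with the fact that $0$ is a genuine fixed point (so that the regimes $\mu_{L_1}(x)=0$ and $\mu_{L_1}(x)\neq 0$ are cleanly separated), and the verification that a monotone bounded orbit cannot stall strictly below $1$ — which is precisely where one uses that $g(t)>t$ strictly on all of $(0,1)$ and that $g$ has no interior fixed point.
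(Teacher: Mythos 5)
Your proof is correct and takes essentially the same approach as the paper: analyse the recurrence $\mu_{L_{i+1}}(x)=\mu_{L_i}(x)-\mu_{L_i}(x)\ln\mu_{L_i}(x)$, treat $0$ and $1$ as fixed points, show strict increase on $(0,1)$, and identify the limit as the unique admissible fixed point $1$. If anything, you are a bit more careful than the paper, which only argues conditionally (``if a limit exists''), whereas you justify existence via monotone convergence and explicitly check that $g(t)=t-t\ln t$ maps $(0,1)$ into itself.
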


\begin{proof}

 $\mu_{L_{i+1}}(x) = \mu_{L_i}(x) - \mu_{L_i}(x)\ln(\mu_{L_i}(x))$, $i = 1,.., n-1$. If $\mu_{L_1}(x) = 1$ then $\mu_{L_i}(x) = 1$ $\forall i = 1,..., n$. Also, if $\mu_{L_1}(x) = 0$ then $\mu_{L_i}(x) = 0$ $\forall i = 1,..., n$. Suppose $\mu_{L_i}(x) \in (0,1)$. Then $\mu_{L_{i+1}}(x) > \mu_{L_i}(x)$, and so for $\mu_{L_1}(x) \in (0,1)$, $\mu_{L_1}(x) <...< \mu_{L_n}(x)$ is a strictly increasing sequence.

If a limit $M^+$ exists, then  we will have $M^+ = M^+ - M^+\ln(M^+)$, so either $M^+ = 0$  or $\ln(M^+) = 0$. We can't have $M^+ = 0$, since we assume that $\mu_{L_1}(x) \in (0,1)$ and the sequence is strictly increasing. Therefore, we must have $\ln(M^+) = 0$ and therefore $M^+ = 1$. 
So 

\[
	\mu_{L_\infty}(x) = \left\{
	\begin{array}{l l}
		1 & \quad \mu_{L_1}(x) \in (0,1]\\
		0 & \quad \mu_{L_1}(x) = 0\\
	\end{array} \right.
\]
\end{proof}

\begin{thm}
\label{thm:concom}
Suppose $L_1,..., L_n$ are labels obtained by repeated application of the contraction operator. Then $\mu_{L_n}$  has a limit $M^-$ and $M^- = 0$ $\forall x \in \Omega$ such that $\mu_{L_1}(x) \neq 1$, and $M^- = 1$ otherwise.
\end{thm}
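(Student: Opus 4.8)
The plan is to mirror the proof of Theorem \ref{thm:dilcom}, using this time the contraction recurrence $\mu_{L_{i+1}}(x) = \mu_{L_i}(x) + (1-\mu_{L_i}(x))\ln(1-\mu_{L_i}(x))$, which holds for a single contraction by Theorem \ref{thm:con} and for repeated contractions by the composition results of Section \ref{sub:comp}. Fix $x \in \Omega$ and abbreviate $a_i = \mu_{L_i}(x)$. First I would handle the two boundary cases. If $a_1 = 1$ then, under the standard convention $0\ln 0 = 0$, the recurrence gives $a_{i+1} = 1$ for every $i$, so $\mu_{L_n}(x) \to 1 = M^-$. If $a_1 = 0$ then $a_{i+1} = 0 + 1\cdot\ln 1 = 0$ for every $i$, so $\mu_{L_n}(x) \to 0 = M^-$.

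For the remaining case $a_1 \in (0,1)$, I would analyse the update map $g(t) = t + (1-t)\ln(1-t)$ on $[0,1)$. Differentiating gives $g'(t) = -\ln(1-t) > 0$ for $t \in (0,1)$, so $g$ is strictly increasing with $g(0) = 0$ and $\lim_{t \to 1^-} g(t) = 1$; hence $g$ maps $[0,1)$ into $[0,1)$. Moreover $g(t) - t = (1-t)\ln(1-t) < 0$ on $(0,1)$, so $g(t) < t$ there. Consequently, once $a_1 \in (0,1)$ the whole sequence $(a_i)$ stays in $(0,1)$, is strictly decreasing, and is bounded below by $0$; by monotone convergence it converges to some limit $M^- \in [0,a_1)$.

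To identify $M^-$, I would pass to the limit in the recurrence using continuity of $g$: $M^- = M^- + (1-M^-)\ln(1-M^-)$, so $(1-M^-)\ln(1-M^-) = 0$, which forces $M^- = 0$ or $M^- = 1$. Since $(a_i)$ is strictly decreasing from a value strictly below $1$, $M^- = 1$ is impossible, so $M^- = 0$. Collecting the three cases, $M^- = 1$ precisely when $\mu_{L_1}(x) = 1$ and $M^- = 0$ whenever $\mu_{L_1}(x) \neq 1$, as claimed. I do not anticipate a genuine obstacle here: the only delicate points are the sign and monotonicity analysis of $g$ — needed simultaneously to keep the iterates inside $[0,1)$ and to make them decrease — and the harmless convention $0\ln 0 = 0$, which is exactly what makes the $\mu_{L_1}(x) = 1$ branch consistent with continuity of $g$.
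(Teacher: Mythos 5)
Your proposal is correct and follows essentially the same route as the paper's proof: boundary cases at $0$ and $1$, strict monotone decrease of the iterates on $(0,1)$, and identification of the limit via the fixed-point equation $M^- = M^- + (1-M^-)\ln(1-M^-)$, ruling out $M^-=1$. The only difference is that you are somewhat more careful than the paper, explicitly verifying via $g'(t)=-\ln(1-t)$ that the iterates remain in $(0,1)$ and invoking monotone convergence to guarantee the limit exists, where the paper simply assumes a limit exists.
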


\begin{proof}
 $\mu_{L_{i+1}}(x) = \mu_{L_i}(x) + (1- \mu_{L_i}(x))\ln(1 - \mu_{L_i}(x))$, $i = 1,..., n-1$. Again, if $\mu_{L_1}(x) = 1$ then $\mu_{L_i}(x) = 1$ $\forall i = 1,..., n$. Also, if $\mu_{L_1}(x) = 0$ then $\mu_{L_i}(x) = 0$ $\forall i = 1,..., n$, and for $\mu_{L_1}(x) \in (0,1)$, $\mu_{L_1}(x) >...> \mu_{L_n}(x)$ is a strictly decreasing sequence.

If a limit $M^-$ exists, then
\begin{align*}
M^- &= M^- + (1 - M^-)\ln(1- M^-)\\ 
\ln(1 - M^-) &= M^-\ln(1- M^-)
\end{align*}

So either $M^- = 0$ or $\ln(1- M^-) = 0$. If $\ln(1- M^-) = 0$ then $M^- = 1$, which is impossible since $\mu_{L_1}(x) \in (0,1)$ and the sequence of $\mu_{L_i}(x)$ is strictly decreasing. Therefore

\[
	\mu_{L_\infty}(x) = \left\{
	\begin{array}{l l}
		0 & \quad \mu_{L_1}(x) \in [0,1)\\
		1 & \quad \mu_{L_1}(x) = 1\\
	\end{array} \right.
\]
\end{proof}

We have shown here that in the limit, the result of applying dilation or contraction modifiers multiple times is to create a crisp set. In the case of dilation, the crisp set includes the whole support of the fuzzy set associated with the original label, whereas in the case of contraction, the concept reduces to include only its prototype.

When deterministic hedges are used, i.e. $\e_2 = f(\e_1)$, the behaviour of the limit depends on the behaviour of the function $f$ and its properties in the limit as $n \rightarrow \infty$ of $f^{-n}$.

\begin{exa}
\label{exa:detcom}
 Suppose $f(\e) = 0.5\e$. Applying this hedge multiple times will result in $\mu_{L_n}(x) = \Delta_1(2^n d(x,P))$. As $n\rightarrow \infty$, $2^n d(x,P) \rightarrow \infty$, except where $d(x,P)=0$. Therefore, 
\[
	\mu_{L_\infty}(x) = \left\{
	\begin{array}{l l}
		0 & \quad d(x,P) > 0\\
		1 & \quad d(x,P) = 0\\
	\end{array} \right.
\]

On the other hand, if $f(\e) = 2\e$, $\mu_{L_n}(x) = \Delta_1(2^{-n}d(x,P))$. As $n\rightarrow \infty$, $2^{-n} d(x,P) \rightarrow 0$, and hence $\mu_{L_\infty}(x) = 1$ $\forall x \in \Omega$. 
\end{exa}

The behaviour of the hedges given in example \ref{exa:detcom} is therefore different from those in theorems \ref{thm:dilcom} and \ref{thm:concom}, since the concept either shrinks to a single point, in the case of contraction, or, in the case of dilation, expands to fill the entire space $\Omega$.
 
\section{Discussion}
\label{sec:disc}
We have presented formulae for linguistic hedges which are both functional and semantically grounded. The modifiers presented arise naturally from the label semantics framework, in which concepts are represented by a prototype and threshold. Our hedges have an intuitive meaning: if I think that the threshold for a concept `small' is of a certain width, then the threshold for the concept `very small' will be narrower. On the other hand, the threshold for the concept `quite small' will be broader. The hedges proposed are examples of `type 1' hedges, i.e. they operate equally across all dimensions of the fuzzy set associated with a concept. The first result presented is somewhat similar to a powering modifier since the core and support of the set remain the same. In \cite{boscempty, marin2003}, it is argued that this property is undesirable for hedges used in fuzzy expert systems, since if a query is returning too large a set of answers, this type of contraction hedge does not reduce this overabundance. However, although the hedges we propose do not at their simplest address the overabundance issue, we argue that they address another problem associated with powering hedges, in that they have a clear semantic grounding that the powering modifiers lack.

\cite{bosc, decock2002} also propose modifiers that are semantically grounded, using the idea of resemblance to nearby objects. Their formulations have the properties that the core and support of the fuzzy set are both changed, thereby addressing the issue of overabundant answers \cite{boscempty, marin2003}. In our most specific case, since the prototype is not altered, the core and support of the fuzzy set representing the concept remain the same. However, our initial proposal can be generalised, as in section \ref{sub:diff}, to apply to the case where $P_1 \neq P_2$. Specifying a semantically meaningful way of altering the boundaries of the prototype would answer the objection that the core and support of a set should change under a linguistic hedge.

The most general result (section \ref{sub:f}) shows that the formula still applies when $\e_2 \leq f(\e_1)$, or $\e_2 \geq f(\e_1)$. Combined with a distribution $\delta$ such that the lower bound of the distribution is not zero, the core and support of the fuzzy set are modified. With the condition $\e_2 = f(\e_1)$, we are able to recreate the result given in \cite{novak} for linear membership functions, and show how the model proposed by \cite{bosc} has strong similarities to our own. In this case we have introduced additional parameters, so the simplicity of the original result is lost. However, the further parameters introduced are no more than those introduced by \cite{novak}, and arguably fewer than those introduced by \cite{bosc}, who require that a resemblance relation be specified, using two additional parameters, and also, that a $T$-norm or fuzzy implication need to be specified. There are various choices of operator that could be used for either of these, and it is not obvious that any one is better than the others. 

We have also shown that the basic case operators `very' and `quite' can be composed, which is not immediately obvious from the formulae (section \ref{sub:comp}). Further, we show that in the limit of composition the membership of any object in the fuzzy part of $L$, i.e. $\{x: 0 < \mu_L(x) < 1\}$, increases to 1 in the case of `quite' or decreases to 0 in the case of `very' (section \ref{sub:limits}). This is similar to the limit of applying the powering modifiers, but differs from what would happen with the modifiers proposed by \cite{bosc}. In that case, the limit of `very' would shrink to a single point and the limit of `quite' would expand to encompass the whole universe of discourse. This can be modelled using the deterministic hedges described in section \ref{sub:det}. Although behaviour differs slightly, in fact human discourse does not  apply modifiers infinitely, so the difference in behaviour is arguably not important.

Our  formulation has the benefit that it can be applied in more situations than simply linguistic hedges. For example, the concept `apple green' has a prototype different to that of just green, and the threshold for `apple green' is likely to be smaller than the threshold for simply `green'. Our model can take account of this.

\section{Conclusions and further work}
We have presented formulae for two simple linguistic hedges, `very' and `quite'. These formulae are functional, hence easy to compute, but also semantically grounded, in that they arise naturally from the conceptual framework of label semantics combined with prototype theory and conceptual spaces theory, and in the most specific case require no additional parameters. We have also shown that two other formulations \cite{bosc, decock2002, novak},  can be derived from this framework with equal or fewer parameters. We have shown that the hedges can be composed and have described their behaviour in the limit of composition.

Further work could look at testing the utility of these hedges in particular classifiers to compare their performance with the classical hedges and with the hedges used by e.g. \cite{boscempty, marin2003}, and also to examine a trade-off between accuracy and the number of parameters used. Alternatively, investigating semantically grounded ways of expanding or reducing prototypes could have a similar impact.

The model could also be extended to the more complicated type 2 hedges such as `essentially', or `technically', by treating dimensions of the conceptual space heterogeneously. This requires using some type of weighting or necessity measure on the dimensions, work which is currently ongoing.

\section{Acknowledgements}
Martha Lewis gratefully acknowledges support from EPSRC Grant No. EP/E501214/1

\bibliographystyle{plain}

\end{document}